
\documentclass[runningheads]{llncs}
\usepackage{graphicx}
\usepackage{tikz}
\usepackage{comment}
\usepackage{amsmath,amssymb} 
\usepackage[accsupp]{axessibility}  
\usepackage{enumitem}
\usepackage{url}
\usepackage{lipsum} 
\usepackage{algorithmicx}
\usepackage[ruled]{algorithm}
\usepackage{algpseudocode}
\usepackage[switch]{lineno}
\usepackage{color}
\usepackage{multirow}
\usepackage[normalem]{ulem}
\usepackage{microtype}
\usepackage{epsfig}
\usepackage{pifont}
\usepackage{dsfont}
\usepackage{textcomp}
\usepackage{wrapfig}
\usepackage{overpic}
\usepackage[export]{adjustbox}
\usepackage{pgfplots}
\usepackage{ulem}
\usepackage{subcaption}
\usepackage{relsize}
\usepackage{caption}
\DeclareCaptionFont{9pt}{\fontsize{9pt}{10pt}\selectfont}
\captionsetup{font=9pt}
\setlength{\abovedisplayskip}{0.5em}
\setlength{\belowdisplayskip}{0.5em}
\setlength{\belowcaptionskip}{-.5em}
\let\oldcenter\center
\let\oldendcenter\endcenter
\renewenvironment{center}{\setlength\topsep{0pt}\oldcenter}{\oldendcenter}
\usepackage{hyperref}
\usepackage{cleveref}

\usepackage{xcolor}

\newcommand{\VG}[1]{{\color{violet}#1}}

\newcommand{\net}{\mathcal{G}}
\newcommand{\loss}{\mathcal{L}}
\newcommand{\netTwo}{\mathcal{G}_1}
\newcommand{\netThree}{\mathcal{G}_2}
\newcommand{\R}{\mathbb{R}}
\newcommand{\inputSpace}{\mathcal{X}}
\newcommand{\outputSpace}{\mathcal{Y}}
\newcommand{\invarianceSet}{S}
\newcommand{\orbitMapping}{h}
\DeclareMathOperator{\argmax}{\arg \max}

\newtheorem{fact}{Fact}
\crefname{fact}{Fact}{Facts}
\crefname{proposition}{Proposition}{Propositions}
\crefname{example}{Example}{Examples}

\pgfplotsset{compat=1.17}

\raggedbottom
\begin{document}
\pagestyle{headings}
\mainmatter

\def\ACCV22SubNumber{677}  

\title{A Simple Strategy to Provable Invariance via Orbit Mapping} 
\titlerunning{A Simple Strategy to Provable Invariance}
\authorrunning{Gandikota et al.}

\author{Kanchana Vaishnavi Gandikota\textsuperscript{1}, Jonas Geiping\textsuperscript{2}, Zorah Lähner\textsuperscript{1}, Adam Czapli\'nski\textsuperscript{1}, Michael Möller\textsuperscript{1}}
\institute{\textsuperscript{1} University of Siegen,\textsuperscript{2} University of Maryland}
\maketitle
\begin{abstract}
Many applications require robustness, or ideally invariance, of neural 
networks to certain transformations of input data. Most commonly, this requirement is addressed by training data augmentation, using adversarial training, or defining network architectures that include  the desired invariance by design. 
In this work, we propose a method to make network architectures  provably invariant with respect to group actions by choosing one element from a (possibly continuous) orbit based on a fixed criterion.   In a nutshell, we intend to 'undo' any possible transformation before feeding the data into the actual network. Further, we empirically analyze the properties of different approaches which incorporate invariance via training or architecture, and demonstrate the advantages of our method in terms of robustness and computational efficiency. 
In particular, we investigate the robustness with respect to rotations of images (which can hold up to discretization artifacts) as well as the provable orientation and scaling invariance of 3D point cloud classification. 
\end{abstract}
\section{Introduction}\label{sec:introduction}
Deep neural networks have revolutionized the field of computer vision over the past decade. Yet,  deep networks trained in a straight-forward way often lack desired robustness. In image classification, for instance, rotational, scale, and shift invariance are often highly desirable properties. While training deep networks with millions of realistic images in datasets like Imagenet \cite{imagenet} confers some degree of in/equi-variance  \cite{tensmeyer2016improving,olah2020naturally,lenc2018understanding}, these properties however, cannot be guaranteed. On the contrary, networks are susceptible to adversarial attacks with respect to these transformations (see e.g. \cite{engstrom2017rotation,finlayson2019adversarial,zhao2020isometry,9665906}), and small perturbations can significantly affect their predictions. To counteract this behavior, the two major directions of research are to either modify the training procedure or the network architecture.
\begin{figure*}
\centering
\begin{subfigure}[b]{.22\textwidth}
\centering
\includegraphics[width=.45\textwidth]{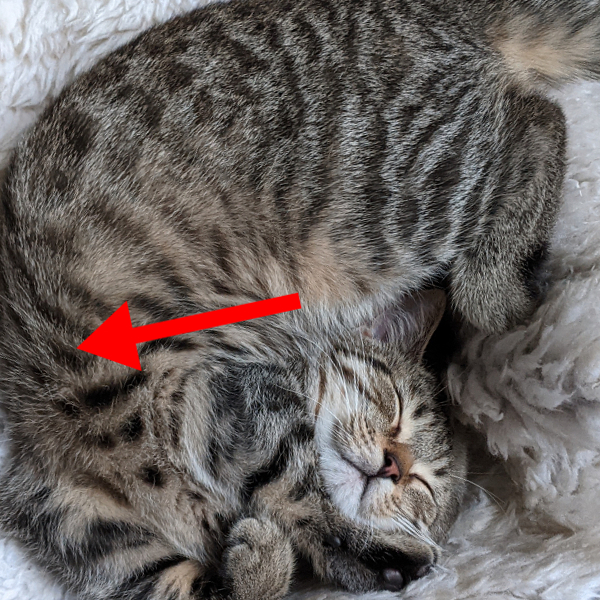}\hspace{0.05cm}
\includegraphics[width=.45\textwidth]{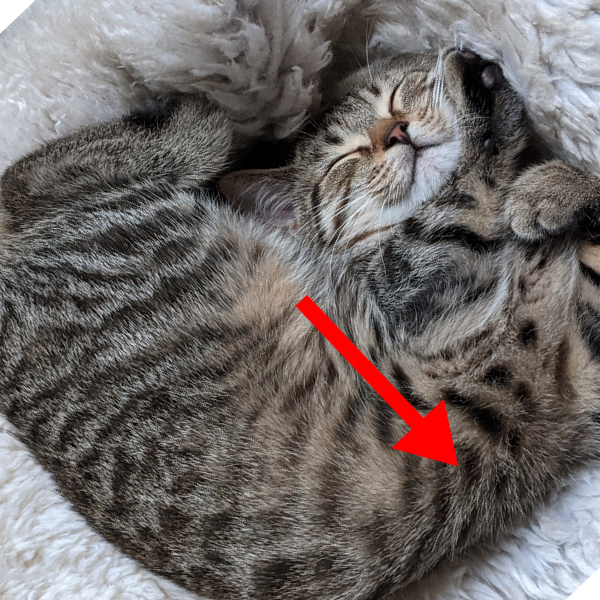}\vspace{0.05cm}\\
\includegraphics[width=.45\textwidth]{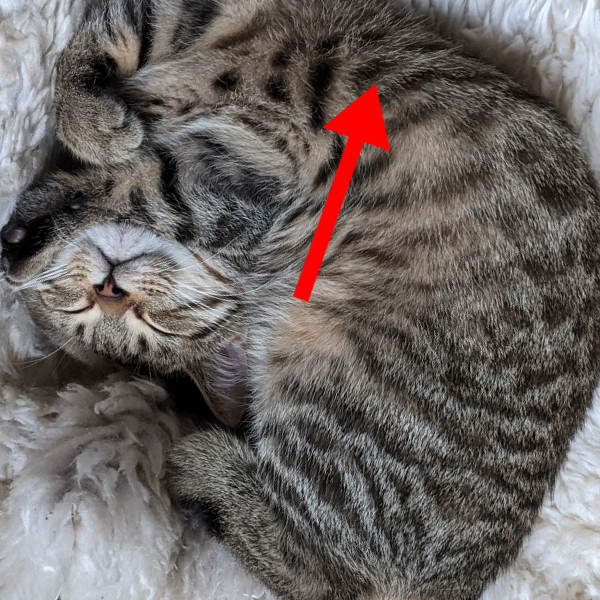}\hspace{0.05cm}
\includegraphics[width=.45\textwidth]{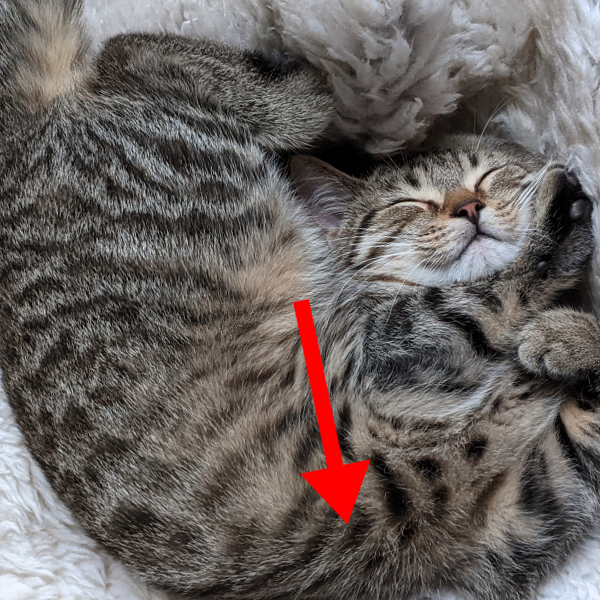}\quad
\tiny{a)~Samples of the orbit}
\end{subfigure}
\begin{subfigure}[b]{.19\textwidth}
\centering
\includegraphics[width=\textwidth]{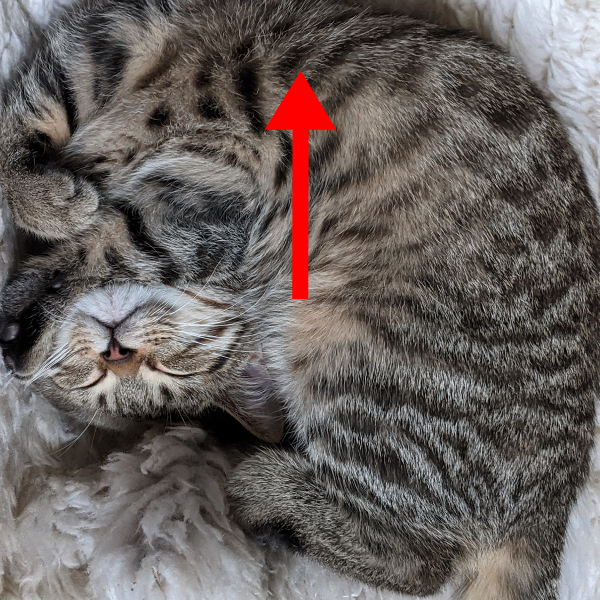}\quad
\tiny{b) Orbit mapping element}
\end{subfigure}
\begin{subfigure}[c]{.35\textwidth}
\centering
\includegraphics[width=\textwidth,margin={0 0 0 -4cm}]{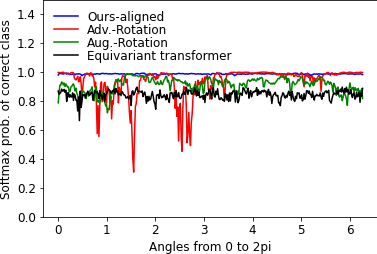}
\end{subfigure}
    \caption{(Left) Picture of a cat in 4 different rotation samples from the continuous orbit of rotations. Our orbit mapping selects the element with mean gradient direction  (marked in red)  along circle pointing upwards.
    (Right) Softmax probabilities of the true label when rotating an image by $0^\circ-360^\circ$. Our method (in blue) is \emph{robust for any angle}, which cannot be guaranteed through data augmentations (green) or adv. training (red).}
    \label{fig:teaser}
\end{figure*}
Modifications of the training procedure replace the common training of a network $\net$ with parameters $\theta$ on training examples $(x^i,y^i)$ via a loss function $\loss$,
\begin{align}
    \min_\theta \sum_{\text{examples i}} \loss(\net(x^i;\theta);y^i),
\end{align}
with a loss function that considers all perturbations in a given set $\invarianceSet$ of transformations to be invariant towards. The most common choices are taking the mean loss of all predictions $\{\net(g(x^i);\theta) ~|~ g \in \invarianceSet\}$ (training with \textit{data augmentation}), or the maximum loss among all predictions (\textit{adversarial training}). However, such training schemes cannot guarantee provable invariance. In particular, training with data augmentation is far from being robust to transformations as illustrated in  Fig.~\ref{fig:teaser}.
The plot shows the softmax probabilities of the true label when feeding the exemplary image at rotations ranging from 0 to $2\pi$ into a network trained with rotational augmentation (green), adversarial training (red) and undoing rotations using a learned network (black). As we can see, rotational data augmentation is not  sufficient to truly make a classification network robust towards rotations, and even the significantly more expensive adversarial training shows instabilities. 

While modifications of the training scheme remain the best option for complex or hard-to-characterize transformations, more structured transformations, e.g., those arising from a group action, allow  modifications to the network architecture to yield provable invariance. As opposed to previous works that largely rely on the ability to enlist all transformations of an input $x$ (i.e., assume a finite \textit{orbit}), we propose to make  neural networks invariant by selecting a specific element from a (possibly infinite) orbit generated by a group action, through an application-specific \textit{orbit mapping}. Simply put, we  undo and fix the transformation or pose. 
Our proposed approach is significantly easier to train than adversarial training methods while being at least equally performant, robust, and computationally cheaper.
We illustrate these findings on the rotation invariant classification of images (on which discretization artifacts from the interpolation after any rotation play a crucial role) as well as on the scale, rotation, and translation invariant classification of 3D point clouds. Our contributions can be summarized as follows:
\begin{itemize}[topsep=0pt]
    \item We present \textit{orbit mapping}, a simple way to adapt neural networks to be in-(or equi)variant to transformations from sets $\invarianceSet$ associated with a group action.
    \item We propose a gradient based orbit mapping strategy for image rotations, which can provably select unique orientation for continuous image models.
    \item Our proposed orbit mapping 
improves robustness of standard networks to transformations even \emph{without} additional changes in training or architecture.
\item Existing invariant approaches also demonstrate gain in robustness to discrete image rotations when combined with orbit mapping.
\item We demonstrate orbit mappings to provable scale and orientation invariant 3D point cloud classification using well known scale normalization and PCA. 
\end{itemize}
\section{Related Work}\label{sec:related}
Several approaches have been developed in the literature to encourage models to exhibit invariance or robustness to desired transformations of data. These include i)~data augmentation using desired transformations, ii)~regularization to encourage  network output to be robust to transformations on the input \cite{simard1991tangent}, iii)~adversarial training~\cite{engstrom2019exploring,Wang_2022_CVPR} and regularization~\cite{yang2019invariance}, iv)~unsupervised or self-supervised pretraining to learn transformation robust representations~\cite{ANSELMI2016112,noroozi2016unsupervised,komodakis2018unsupervised,zhang2019aet,pmlr-v161-gu21a}, v)~parameterized learning of augmentations to learn invariances from training data\cite{wilk2018learning,benton2020learning}, vi)~use of hand-crafted invariant shallow  \cite{sheng1994orthogonal,yap2010polar,tan1998rotation,lazebnik2005sparse,MANTHALKAR20032455} or deep ~\cite{6522407,Sifre_2013_CVPR,Oyallon_2015_CVPR} features for downstream classification tasks vii)~incorporating desired invariance properties in to the  network design~\cite{cohen2016group,Worrall_2017_CVPR,Weiler2019GeneralES,zhang2020learning,yu2020deep}, and viii)~train time/test time data transformation. Recent works \cite{balunovic2019certifying,Fischer2020certified} have also explored certifying geometric robustness of  networks. The approaches i)-v) can  improve robustness but cannot yield provable invariance to transformations. Hand-crafting features can yield desired invariance, but is difficult and often sacrifices accuracy. Provable invariance to a finite number of transformations is achievable by applying all such transformations to the each input data point and pooling the corresponding features  \cite{manay2006integral,laptev2016ti}. While this strategy can even be applied only during test time, it can not be extended to sets with infinitely many transformations. 
Recent approaches \cite{cohen2016group,ravanbakhsh2017equivariance,Weiler2019GeneralES} incorporate in-/equivariances when the desired transformations of the data can be formulated as a group action, e.g. enforcing equivariance  in each layer separately. 
Layer wise approaches for equivariance to finite groups such as \cite{cohen2016group} typically use all possible transformations at each layer.\\
\textbf{Canonicalization}  
Closely related to our approach are methods which align input to a normalized or canonical pose.  The use of PCA or scale renormalization are well known approaches to normalizing point clouds. However, PCA-based pose canonicalization is known to suffer from  ambiguities, and  learning based approaches  \cite{xiao2020endowing,yu2020deep,Li_2021_ICCV} have been proposed for disambiguation. Several recent works directly leverage deep learning for 3d pose canonicalization, for example training with ground truth poses \cite{rempe2020caspr,wang2019normalized} or  self-supervised learning \cite{sun2021canonical,spezialetti2020learning,sajnani2022condor}. For 2D images, PCA-based canonicalization is possible only with binary images \cite{rehman2018automatic}; the use of Radon transformations \cite{1424459} requires an expensive, fine discretization of continuous rotations. The use of spatial transformer networks \cite{jaderberg2015transformer} is an alternate learning based approach to 2D/3D pose normalization which can be used along with an application-dependent coordinate transformation \cite{tai2019equivariant,esteves2018polar}. Such learning-based approaches, however, require additional training with data augmentation and cannot guarantee invariance. Since our orbit mappings essentially select a canonical group orbit element, our work can be interpreted as a formalization of canonicalization for group transformations. In contrast to learning based approaches, we select a canonical element from the orbit using simple analytical solutions, which can improve robustness even without data augmentations.\\
\textbf{Provable Rotational In-/equivariance in 2D}
Several works \cite{Sifre_2013_CVPR,Oyallon_2015_CVPR,cohen2016group,marcos2017rotation,veeling2018rotation,marcos2016learning} have considered layer wise equivariance to discrete rotations using multiple rotated versions of filters at each layer, which was formalized using group convolutions in \cite{cohen2016group}.
While \cite{cohen2016group,marcos2017rotation,veeling2018rotation,marcos2016learning} learn these filters by training, \cite{Sifre_2013_CVPR,Oyallon_2015_CVPR} make use of rotated and scaled copies of fixed wavelet filters at each layer.
For equivariance to continuous rotations, Worrall et al. \cite{Worrall_2017_CVPR} utilize circular harmonic filters at each layer. All these layer wise approaches for group equivariance in images were unified in a single framework in \cite{Weiler2019GeneralES}. Instead of layer-wise approaches, \cite{fasel2006rotation,laptev2016ti,henriques2017warped}  pool the features of  multiple rotated copies of images input to the network. \\
\textbf{Rotation Invariance in 3D}
Due to the different representations of 3D data (e.g. voxels, point clouds, meshes), many strategies exist. Some techniques  for image invariances can be adapted to voxel representations, e.g. probing several rotations at test time \cite{wu2015shapenets,Wang-2017-ocnn},  use of rotationally equivariant convolution kernels \cite{weiler20183d,thomas2018tensor,fuchs2020se}.    Spatial transformers have also been used to learn 3D pose normalization,  e.g.  in the classical PointNet architecture \cite{qi2017pointnet}, and its extension PointNet++ \cite{qi2017pointnetpp} 
which additionally considers hierarchical and neighborhood information. While point clouds do not suffer from discretization artifacts after rotations, they struggle with less clear neighborhood information due to unordered coordinate lists. 
\cite{ZhangR_18_gcnn_point_cloud} solve this by adding hierarchical graph connections to point clouds and using graph convolutions. 
However, the features learned using graph convolutions still depend on the rotation of the input data. \cite{horie2020isometric,pmlr-v139-satorras21a} propose graph convolution networks equivariant to isometric transformations.
\cite{esteves2018learning,rao2019spherical} project point clouds onto 2D sphere and employ spherical convolutions to achieve rotational equivariance. \cite{deng2018ppf} and \cite{zhao20193d} achieve rotation invariance on point clouds by considering pairs of features in the tangent plane of each point. While local operations and convolutions on the surface of triangular meshes are invariant to global rotations by definition \cite{monti2016monet}, they however do not capture global information. 
MeshCNN \cite{hanocka2019meshcnn} addresses this by adding pooling operations through edge collapse. \cite{sharp2020diffusion} defines a representation independent network structure based on heat diffusion which can balance between local and global information.
\section{Proposed Approach}
\label{sec:proposedApproach}
Our idea is straightforward. We make neural networks invariant by consistently selecting a fixed element from the orbit of group transformations, i.e, we modify the input pose  such that every element from the orbit of transformations  maps to the same canonical element. For example, different rotated versions of an image are mapped to have the same orientation as visualized in Fig.~\ref{fig:rotationIllustration}. In conjunction with such \textit{orbit mapping}, any standard network architecture can achieve provable invariance. In the following, we formalize our approach to achieve invariance.
\subsection{Invariant Networks w.r.t. Group Actions}
We consider a network $\net$ to be a function $\net: \inputSpace \times \R^p \rightarrow \outputSpace$ that maps data $x \in \inputSpace$ from some suitable input space $\inputSpace$ to some prediction $\net(x;\theta) \in \outputSpace$ in an output space $\outputSpace$ where the way this mapping is performed depends on parameters $\theta \in \R^p$. The question is how, for a given set $\invarianceSet \subset \{g: \inputSpace \rightarrow \inputSpace\}$ of transformations of the input data, we can achieve the \textit{invariance} of $\net$ to $\invarianceSet$ defined as
\begin{align}
\label{eq:invariance}
    \net(g(x);\theta) = \net(x;\theta) \quad \forall x \in \inputSpace, ~ g \in \invarianceSet, ~ \theta \in \R^p.
\end{align}
The invariance of a network with respect to transformations in $\invarianceSet$ is of particular interest when $\invarianceSet$ induces a \textit{group action}\footnote{\smaller{A (left) group action of  a group $S$ with the identity element $e$,  on a set $X$ is a map 
$\sigma : S\times X\rightarrow X,$
that satisfies i)~$\sigma(e,x)=x$ and  ii)~$\sigma(g,\sigma(h,x))=\sigma(gh,x)$,  $\forall g,h\in S$ and  $\forall x\in X$.  When the action being considered is clear from the context, we write  $g(x)$ instead of $\sigma(g,x)$.
}} on $\inputSpace$, which is what we will assume about $\invarianceSet$ for the remainder of this paper. 
Of particular importance for the construction of invariant  networks, is the set of all possible transformations of input data $x$,
\begin{equation}
\label{eq:orbit}
\invarianceSet \cdot x = \{g(x) ~|~ g \in \invarianceSet\},
\end{equation}
which is called the \textit{orbit of $x$}. A basic observation for constructing invariant networks is that any network acting on the orbit of the input is automatically invariant to transformations in $\invarianceSet$: 
\begin{fact}\textbf{Characterization of Invariant Functions via the Orbit:}
\label{fact:invariance}
Let $\invarianceSet$ define a group action on $\inputSpace$. A network $\net: \inputSpace \times \R^p \rightarrow \outputSpace$ is invariant under the group action of $\invarianceSet$ if and only if it can be written as $\net(x;\theta) =  \netTwo(\invarianceSet \cdot x;\theta)$ for some other 
network $\netTwo: 2^\inputSpace \times \R^p \rightarrow \outputSpace$. 
\end{fact}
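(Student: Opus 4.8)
The plan is to prove the two implications separately, the core observation being that the orbit map $x \mapsto \invarianceSet \cdot x$ is itself invariant under the group action, so that invariance of $\net$ is equivalent to $\net$ factoring through orbits.

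First, for the easy direction (the ``if''), I would assume $\net(x;\theta) = \netTwo(\invarianceSet \cdot x;\theta)$ and establish the auxiliary identity $\invarianceSet \cdot (g(x)) = \invarianceSet \cdot x$ for every $g \in \invarianceSet$. This is where the group structure is essential: unrolling the definition gives $\invarianceSet \cdot (g(x)) = \{h(g(x)) \mid h \in \invarianceSet\} = \{(hg)(x) \mid h \in \invarianceSet\}$ by the action axiom $\sigma(h,\sigma(g,x)) = \sigma(hg,x)$, and since right multiplication by $g$ is a bijection of the group $\invarianceSet$ onto itself, the index set $\{hg \mid h \in \invarianceSet\}$ is all of $\invarianceSet$. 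Hence the two orbits coincide. Substituting, $\net(g(x);\theta) = \netTwo(\invarianceSet \cdot g(x);\theta) = \netTwo(\invarianceSet \cdot x;\theta) = \net(x;\theta)$, which is exactly the invariance \eqref{eq:invariance}.

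For the converse (the ``only if''), I would assume $\net$ is invariant and construct $\netTwo$ explicitly. Since the orbits partition $\inputSpace$, I define $\netTwo$ on any orbit $O = \invarianceSet \cdot x$ by picking a representative and setting $\netTwo(O;\theta) := \net(x;\theta)$, extending $\netTwo$ arbitrarily (say by a constant) to those elements of $2^\inputSpace$ that are not orbits. The one thing that must be checked -- and the only genuine subtlety in the argument -- is that this is well defined: if $x'$ is another representative of the same orbit, then $x' = g(x)$ for some $g \in \invarianceSet$, and invariance gives $\net(x';\theta) = \net(g(x);\theta) = \net(x;\theta)$, so the value assigned to $O$ is independent of the chosen representative. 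By construction $\net(x;\theta) = \netTwo(\invarianceSet \cdot x;\theta)$, completing this direction.

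The argument is almost entirely bookkeeping; the parts deserving the most care are the well-definedness of $\netTwo$ in the forward direction and the bijection argument yielding $\invarianceSet \cdot g(x) = \invarianceSet \cdot x$ in the backward direction. Both are precisely the places where the \emph{group} axioms (closure, inverses, associativity of the action) are used, rather than merely the property that $\invarianceSet$ is some collection of maps.
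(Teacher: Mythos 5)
Your proposal is correct and rests on exactly the same key fact the paper invokes, namely that $\invarianceSet \cdot g(x) = \invarianceSet \cdot x$ for all $g \in \invarianceSet$ whenever $\invarianceSet$ is a group; the paper states only this observation and leaves the rest implicit, while you additionally spell out the routine converse construction (defining $\netTwo$ on orbits via representatives and checking well-definedness). No gaps; this is the intended argument, just written out in full.
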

The above observation is based on the fact that $\invarianceSet \cdot x = \invarianceSet \cdot g(x)$ holds for any $g \in \invarianceSet$, provided that $\invarianceSet$ is a group. Although not taking the general perspective of \cref{fact:invariance}, approaches, like \cite{laptev2016ti}, which integrate (or sum over finite elements of) the mappings of $\net$ over a  (discrete) group can be interpreted as instances of \cref{fact:invariance} where  $\netTwo$ corresponds to the  summation.  
Similar strategies of applying all transformations in $\invarianceSet$ to the input $x$ can be pursued for the design of equivariant networks, see Appendix  A.
\subsection{Orbit Mappings}
While \cref{fact:invariance} is stated for general (even infinite) groups, realizations of such constructions from the literature often assume a finite
orbit. In this work we would like to include an efficient solution even for cases in which the orbit is not finite, and utilize \cref{fact:invariance} in the most straight-forward way: We propose to construct provably invariant networks  $\net(x;\theta) =  \netTwo(\invarianceSet \cdot x;\theta)$ by simply using an 
$$\text{\textit{orbit mapping }} \orbitMapping:\{\invarianceSet \cdot x ~|~ x \in \inputSpace \} \rightarrow \inputSpace, $$
which uniquely selects a particular element from an orbit as a first layer in $\netTwo$. Subsequently, we can proceed with any standard network architecture and \cref{fact:invariance} still guarantees the desired invariance. A key in designing instances of orbit mappings is that they should not require enlisting all elements of $\invarianceSet \cdot x$ in order to evaluate $\orbitMapping(\invarianceSet \cdot x)$. 
Let us provide more concrete examples of orbit mappings. 
\begin{example}[Mean-subtraction]
A common approach in data classification tasks is to first normalize the input by subtracting its mean. Considering $\inputSpace = \R^n$ and $\invarianceSet = \{g:\R^n \rightarrow \R^n~|~ g(x) = x + a\mathds{1}, \text{ for some } a\in \R\} $, with $\mathds{1}\in \R^n$ being a vector of all ones, input-mean-subtraction is an orbit mapping that selects the unique element from any 
$\invarianceSet\cdot x$ which has zero mean. 
\end{example}
\begin{example}[Permutation invariance via sorting]
\label{example:sorting}
Consider $\inputSpace= \R^n$, and $\invarianceSet$ to be all permutations of vectors in $\R^n$, i.e., $\invarianceSet = \{s \in \{0,1\}^{n \times n} ~|~ \sum_i s_{i,j}=1~\forall j,~~ \sum_js_{i,j} = 1~\forall i\}$. We could define 
an orbit mapping that  selects the element from an orbit whose entries are sorted by magnitude in an ascending order. 
\end{example}

With the very natural condition that  orbit mappings really select an element from the orbit, i.e., $\orbitMapping(\invarianceSet \cdot x) \in \invarianceSet \cdot x$, we can readily construct equivariant networks by applying the inverse mapping, see Appendix A. In our \cref{example:sorting}, undoing the sort operation at the end of the network allows to transfer from an invariant, to an equivariant network. 

As a final note, our concept of orbit mappings can further be generalized by $\orbitMapping$ not mapping to the input space $\inputSpace$, but to a different representation, which can be beneficial for particular, complex groups. In geometry processing, for instance, an important group action are isometric deformations of shapes. A common strategy to handle these (c.f. \cite{ovsjanikov12functionalmaps}) is to identify any shape with the eigenfunctions of its Laplace-Beltrami operator  \cite{pinkall93cotan}, which represents a natural (generalized) orbit mapping. We refer to \cite{litany17deepFM,eisenberger2020deepshells,huang2019operatornet} for exemplary deep learning applications. 
\section{Applications}
We will now present two specific instances of orbit mappings for handling continuous rotations of images as well as for invariances in 3D point cloud classification.
\subsection{Invariance to continous image rotations}
\textbf{Images as functions} 
\begin{figure}[t]
    \centering
    \includegraphics[angle=-90,origin=c,clip,width=0.55\linewidth,trim={0 0cm 0cm 0},clip]{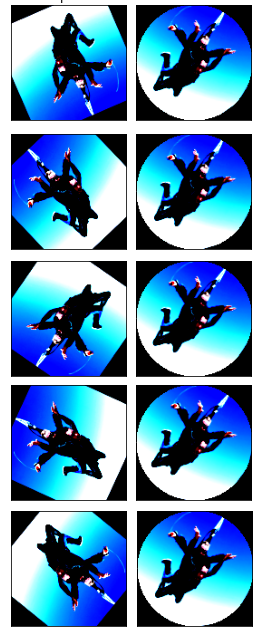}
    \vspace{-6em}
    \caption{Images of different orientations (top) are consistently aligned with the proposed gradient-based orbit mapping (bottom). }
    \label{fig:rotationIllustration}
\end{figure}
Let us consider the important example of invariance to continuous rotations of images. To do so, consider $\inputSpace \subset \{u:\Omega\subset \R^2 \rightarrow \R\}$ to represent images as functions. For the sake of simplicity, we consider grayscale images only, but this extends to color images in a straight-forward way. In our notation $z \in \R^2$ represents spatial coordinates of an image (to avoid an overlap with our previous $x\in \inputSpace$, which we used for the input of a network). We set
\begin{align}    \label{eq:rotationsOnImages}
    \begin{split}
    & \invarianceSet = \{g:\inputSpace \rightarrow \inputSpace ~|~ g\circ u(z) = u(r(\alpha)z),
    \text{ for } \alpha \in \R \},\\
   & \text{and }  r(\alpha) = \begin{pmatrix}\cos(\alpha) & -\sin(\alpha)\\ \sin(\alpha) & \cos(\alpha) \end{pmatrix}.
    \end{split}
\end{align}
As $\invarianceSet$ has infinitely many elements, approaches that worked well for rotations by $90$ degrees like \cite{cohen2016group} are not applicable anymore. 
We instead  propose to uniquely select an element from the continuous orbit of rotation $g\in \invarianceSet$ by choosing a rotation that makes the average gradient of the image  $\int_Z \nabla (g\circ u)(z) ~dz$ over  a suitable set $Z$, e.g. a circle around the image center point upwards. It holds that 
\begin{align*}
 \begin{split}
 \nabla (g\circ u)(z) = r^T(\alpha)\nabla u\left(r(\alpha)z\right) \text{ such that}\\
 \int_Z \nabla(g\circ u)(z) dz = \int_Z r^T(\alpha)\nabla u\left(r(\alpha)z\right) dz.
\end{split}
\end{align*}
Substituting $\varphi=r(\alpha)z$, we obtain
\begin{align}
 \begin{split}
 \int_Z r^T(\alpha)\nabla u\left(r(\alpha)z\right) dz =
 \int_{r^T(\alpha)Z} r^T(\alpha)\nabla u\left(\varphi\right) d\varphi
  =  r^T(\alpha)\int_{Z} \nabla u\left(\varphi\right) d\varphi
 \end{split}
\end{align}
where we used that $Z$ is rotationally invariant.
Thus, choosing a rotation that makes  $\int_Z \nabla (g\circ u)(z) ~dz$  point upwards is equivalent to solving
\begin{align}\label{eq:max_obj}
r(\hat{\alpha}) = \argmax_{r(\alpha)}& \left\langle \begin{pmatrix}1\\0 \end{pmatrix}, r^T(\alpha)\int_Z \nabla u(\varphi) ~d\varphi  \right\rangle
\end{align} 
whose solution is given by  $\hat{\alpha}$ such that
\begin{align}
\begin{pmatrix}
   \cos\hat{\alpha} \\\sin\hat{\alpha}
 \end{pmatrix}= 
    \left( \frac{\int_Z \nabla u(z) ~dz}{\|\int_Z \nabla u(z) ~dz\|}\right).
\label{eq:gradRotation}
\end{align}
Note that \eqref{eq:gradRotation} yields  unique solution to the maximization problem. Since a consistent pose is always selected\footnote{Note that $r^T(\alpha)= r(-\alpha)$, therefore if the predicted rotation for $u(z)$ is $\beta$, then for $u(r(\gamma)z)$, it is $\beta-\gamma$, i.e the same element is consistently selected.}, it is an invariant mapping. When $\int_Z \nabla u(z) ~dz=0$,  any $g \in \invarianceSet$ maximizes \eqref{eq:max_obj}. However, numerically $\int_Z \nabla u(z) ~dz$ rarely evaluates to exact zero and its magnitude of  determines the stability of orbit mapping.

\begin{table*}[t]
\begin{center}
\resizebox{0.7\linewidth}{!}{
\begin{tabular}{l l| l l l| l l l| l l l}
    \hline
    \multirow{2}{*}{Method}&\multirow{2}{*}{OM}(Ours)&\multicolumn{3}{c|}{CIFAR10}& \multicolumn{3}{c|}{HAM10000}&\multicolumn{3}{c}{CUB200}\\
    &&Clean&Avg.&Worst&Clean&Avg.&Worst&Clean&Avg.&Worst\\
    \hline
    \multirow{2}{*}{Std.}  &  \ding{55} & \textbf{93.98} &  40.06&1.31 &93.82&91.73&82.52&\textbf{77.41}&53.45&8.07\\
&\ding{51} Train+Test& 87.99& 84.12&68.60&93.31&91.38&87.96&71.19&71.56&58.80
\\
\multirow{2}{*}{RA}& \ding{55} &
85.54& 75.99&44.71&93.30&90.81&82.30&69.89&70.12&41.01\\
& \ding{51} Train+Test& 85.40& 81.82&71.09&93.41&92.13&88.55&70.35&70.72&57.54\\
\hline 
STN&\ding{55} &83.74&78.86&54.03&--&--&--&--&--&--\\
ETN&\ding{55} &84.39&80.30&64.08&92.47&90.85&84.32&64.14&66.95&52.85\\
Adv.&\ding{55}&69.32&68.54&50.21&92.28&91.87&85.04&64.54&64.07&42.82\\
Mixed&\ding{55}&91.15&68.37&17.15&93.71&92.13&84.53&68.56&65.91&42.87\\
Adv.-KL&\ding{55}&72.28&70.29&51.05&92.54&91.79&85.42&64.47&64.65&43.04\\
Adv.-ALP&\ding{55}&71.25&70.30&52.29&92.89&91.84&85.98&64.63&64.34&43.63\\
\hline
\multirow{2}{*}{TIpool} &\ding{55}&93.56&66.46&20.22&93.19 & 91.87&88.16&76.80&74.90&59.04\\
&\ding{51} Train+Test& 91.94&\textbf{88.77}&76.26&\textbf{93.83} & 92.05& \textbf{89.81}&76.82&\textbf{77.18}&\textbf{69.19}\\
\multirow{2}{*}{TIpool-RA}& \ding{55}&91.40&84.65&67.28&93.39 & 91.87& 88.12&73.47&74.71&62.82\\
& \ding{51} Train+Test&90.47&87.92&\textbf{80.07}&93.68& \textbf{92.78}& 89.30
&74.78&75.89&67.78 \\
\hline
\end{tabular}}
\end{center}
\caption{Comparison of orbit mapping \emph{(OM)} with  training and architecture based methods. Robustness to rotations is compared using the average and worst case accuracies over 5 runs with test images rotated in steps of $1^\circ$ using bilinear interpolation.}
\label{tab:rob.train_reg.}
\end{table*}

\begin{table*}[t]
\begin{center}
\resizebox{\linewidth}{!}{
\begin{tabular}{l l  l  l l l l l  c l l l }
    \hline
\multirow{2}{*}{Train} &  \multirow{2}{*}{OM}& \multirow{2}{*}{Clean}&  \multicolumn{3}{c}{Average }& \multicolumn{3}{c}{Worst-case}\\
\cline{4-6}\cline{8-10}
&&&Nearest&Bilinear&Bicubic&&Nearest&Bilinear&Bicubic\\
\hline
\multirow{2}{*}{Std.}  &  \ding{55} & \textbf{93.98$\pm$0.32} & 35.12$\pm$0.81& 40.06$\pm$0.44 &42.81$\pm$0.50&&0.79$\pm$0.38 &1.31$\pm$0.13& 2.22$\pm$0.17\\
&\ding{51} Train+Test& 87.99$\pm$0.43&72.40$\pm$0.33& 84.12$\pm$0.55 &86.61$\pm$0.49&&34.57$\pm$0.94 &68.60$\pm$0.81& 74.49$\pm$0.84\\
\hline 
\multirow{3}{*}{RA}& \ding{55} &
85.54$\pm$0.72&80.47$\pm$0.74& 75.99$\pm$0.72 &79.47$\pm$0.65&&45.50$\pm$0.83 &44.71$\pm$0.74& 50.50$\pm$0.78\\
 & \ding{51} Test    &79.26$\pm$0.42
 &	74.93$\pm$0.51& 69.31$\pm$0.65 &73.94$\pm$0.63&&48.93$\pm$0.75 &52.18$\pm$0.91& 58.69$\pm$0.78\\
& \ding{51} Train+Test& 85.40$\pm$0.57&84.37$\pm$0.58& 81.82$\pm$0.59 &84.82$\pm$0.52&&66.22$\pm$0.75 &71.09$\pm$1.01& 76.44$\pm$0.89\\
\hline
\multirow{3}{*}{\shortstack{RA-\\combined}}      &\ding{55}&92.42$\pm$0.21&80.90$\pm$0.64& 82.23$\pm$0.74 &82.71$\pm$0.69&&36.98$\pm$1.27 &48.07$\pm$1.66& 49.51$\pm$1.47\\
&  \ding{51} Test&   82.55$\pm$0.86&76.33$\pm$0.95& 77.93$\pm$0.68 &78.42$\pm$0.64&&45.44$\pm$1.32 &60.23 $\pm$1.24&62.18$\pm$1.33\\
&  \ding{51} Train+Test&86.69$\pm$0.12&84.06$\pm$0.21& 85.27$\pm$0.23 &86.06$\pm$0.20&&61.75$\pm$0.76 &75.29$\pm$0.42& 77.25$\pm$0.27\\
\hline     
Adv.& \ding{55}&69.32$\pm$1.61&61.73$\pm$1.12& 68.54$\pm$0.68 &68.00$\pm$0.31&&36.95$\pm$0.97 &50.21$\pm$0.55& 49.73$\pm$0.98\\
Mixed& \ding{55}&91.15$\pm$0.15&54.55$\pm$0.40& 68.37$\pm$0.66 &68.48$\pm$0.37&&3.86$\pm$0.13 &17.15$\pm$1.25& 16.85$\pm$0.93\\
Adv.-KL& \ding{55}&72.28$\pm$2.05&62.60$\pm$1.72& 70.29$\pm$1.42 &69.84$\pm$1.29&&32.60$\pm$0.74 &51.05$\pm$2.47& 51.11$\pm$1.03\\
Adv.-ALP&\ding{55}&71.25$\pm$0.97&62.36$\pm$2.19& 70.30$\pm$1.50 &69.71$\pm$1.22&&33.98$\pm$1.44 &52.29$\pm$1.76& 52.57$\pm$1.57\\
STN&\ding{55}&83.74$\pm$0.50&81.94$\pm$0.51& 78.86$\pm$0.73 &82.21$\pm$0.55&&51.23$\pm$1.01 &54.03$\pm$1.36& 59.65$\pm$1.31\\
ETN&\ding{55}&84.39$\pm$0.09&82.98$\pm$0.28& 80.30$\pm$0.55 &83.31$\pm$0.31&&59.40$\pm$0.76 &64.08$\pm$0.78& 68.75$\pm$0.83\\
Augerino&\ding{55}&83.68$\pm$0.76&80.17$\pm$0.70& 82.27$\pm$0.69 &81.69$\pm$0.72&&52.44$\pm$0.66 &60.36$\pm$1.00& 60.63$\pm$0.94\\

TIpool&\ding{55}&93.56$\pm$0.25&55.96$\pm$0.39& 66.46$\pm$1.36 &70.70$\pm$0.77&&3.14$\pm$1.09 &20.22$\pm$1.51& 27.88$\pm$1.09\\
TIpool-RA&\ding{55}&91.40$\pm$0.17&87.50$\pm$0.24& 84.65$\pm$0.51 &87.31$\pm$0.29&&66.52$\pm$1.31&67.28$\pm$1.03& 72.35$\pm$0.83\\
TIpool&\ding{51}Train+Test&91.94$\pm$0.38&78.66$\pm$0.83& 88.77$\pm$0.51 &\textbf{90.76$\pm$0.40}&&42.01$\pm$1.07 &76.26$\pm$1.12& 81.46$\pm$1.02\\
TIpool-RA&\ding{51}Train+Test&90.47$\pm$0.36&\textbf{89.37$\pm$0.36}& 87.92$\pm$0.36 &89.91$\pm$0.34&&\textbf{74.51$\pm$0.79} &80.07$\pm$0.69& 83.76$\pm$0.60\\
TIpool-RA&\multirow{2}{*}{\ding{51}Train+Test}&\multirow{2}{*}{91.09$\pm$0.40}&\multirow{2}{*}{89.02$\pm$0.30}&\multirow{2}{*}{\textbf{90.13$\pm$0.34}} & \multirow{2}{*}{90.64$\pm$0.30} &&\multirow{2}{*}{70.18$\pm$1.12} &\multirow{2}{*}{\textbf{82.71$\pm$0.62}}& \multirow{2}{*}{\textbf{84.26$\pm$0.41}}\\
combined\\
\hline
\end{tabular}
}
\end{center}
\caption{Effect of augmentation on robustness to rotations with different interpolations. Shown are clean accuracy on standard CIFAR10 test set,  average and worst-case accuracies on rotated test set with mean and standard deviations over 5 runs. }
\label{tab:train_test}
\end{table*}
\hspace{-1.5em}\textbf{Discretization~}
For a discrete (grayscale) image given a matrix $\tilde{u} \in \R^{n_y \times n_x}$, we first apply Gaussian blur with a standard deviation of $\sigma =1.5 $ (to  reduce the effect of noise and create a smooth image), and subsequently construct an underlying continuous function $u:\Omega\subset \R^2 \rightarrow \R$ by bilinear interpolation. For the set $Z$ we choose two circles of radii $0.05$ and $0.4$ (for $\Omega$ being normalized to $[0,1]^2$). We approximate the integral by a  sum over finite evaluations of the derivative along each circle, using exact differentiation of the continuous image model. 
This strategy can stabilize arbitrary rotations  successfully as illustrated in Fig.~\ref{fig:rotationIllustration}. However, in practice, the magnitude of $\int_Z \nabla u(z) ~dz$ and interpolation artifacts affect the stability of the orbit mapping. We analyze the stability of the proposed gradient based orbit-mapping for discrete images in Appendix C, where we observe that use of forward or central differences to approximate gradients further deteriorates the stability of orbit mapping. Since the orbit mapping for discrete images has instabilities, exact invariance to rotations cannot be guaranteed. Even when the integral values are large leading to a stable orbit mapping,  our approach does not need to give the same rotation angle for semantically similar content, for example, different cars are not necessarily rotated to have the same orientation. Due to these reasons, our approach can further benefit from augmentation.\vspace{0.25em}\\
\textbf{Experiments} 
To evaluate our approach, we use orbit mapping in conjunction with  image classification networks on three datasets: On CIFAR10, we train a Resnet-18 \cite{he2016deep} from scratch. On the HAM10000 skin image dataset \cite{tschandl2018ham10000}, we finetune an NFNet-F0 network~\cite{brock2021high}, and on CUB-200~\cite{wah2011caltech} we finetune a Resnet-50~\cite{he2016deep}, both  pretrained on ImageNet. While the datasets CIFAR10 and CUB-200 have an inherent variance in orientation, for the HAM10000 skin lesion classification,  exact rotation invariance is desirable. Finally, we also perform experiments with RotMNIST using state of the art E2CNN network\cite{Weiler2019GeneralES}. The details of the protocol used for training all our networks as well as some additional experiments are provided in the Appendix E.  We compare with following approaches on CIFAR10, HAM10000, and CUB-200: 
\emph{i)~adversarial training:}
$\min_\theta \sum_{\text{examples i}} \loss(\net(\hat{x}^i;\theta);y^i)$, for $\hat{x}^i=\argmax_{z \in S\cdot x^i}\loss(\net(z);y^i)$. 
This is approximated by selecting the worst out of  10 different random rotations for each image  in  every iteration, following \cite{engstrom2019exploring}. It is referred to as Adv. in  Tab.~\ref{tab:rob.train_reg.}.
\emph{ii)~mixed mode training:}  $\min_\theta \sum_{\text{examples i}} \loss(\net(\hat{x}^i;\theta);y^i)+\loss(\net(x^i;\theta);y^i)$ which uses both natural and adversarial examples $\hat{x}^i$.
\emph{iii)~adversarial training with regularization:} Use of adversarial logit pairing and KL-divergence regularizers~\cite{yang2019invariance}  along with adversarial training (indicated as Adv.-ALP and Adv.-KL in Tab.~\ref{tab:rob.train_reg.}):\\
\indent\emph{a)~adversarial logit pairing~(ALP)}:  $R_{ALP}(\net,x^i,y^i)=\|\net(x^i;\theta)-\net(\hat{x}^i;\theta)\|_2^2$ , \\
\indent\emph{b)~KL-divergence}:$R_{KL}(\net,x^i,y^i)=D_{KL}(\net(x^i;\theta)||\net(\hat{x}^i;\theta))$. \\
\emph{iv)~transformation invariant pooling (TIpool):} which is a provably invariant approach for discrete rotations \cite{laptev2016ti}, where the features of multiple rotated copies of input image are pooled before the final classification. We use 4 rotated copies of images rotated in multiples of 90 degrees. \emph{v)~Spatial transformer networks (STN):} which learns to undo the transformation by training
 using appropriate data augmentation \cite{jaderberg2015transformer}.
\emph{vi)~Equivariant transformer networks (ETN):} which additionally uses appropriate coordinate transformation  along with a learned spatial transformer to undo the transformation \cite{tai2019equivariant}.
We also compare with the simple baseline of augmenting with random rotations, referred to as RA in Tab.~\ref{tab:rob.train_reg.}. Additionally, we also compare with \cite{benton2020learning}, an approach which learns distribution of augmentations  on  the task of rotated CIFAR10 classification, referred to as Augerino in Tab.~\ref{tab:train_test}. We use 4 samples from the learned distribution of augmentations during both training and test. We would also like to point out that adversarial training using the worst of $10$ samples roughly increases the training effort of the underlying model by a factor of~$5$.\vspace{0.25em}\\
\textbf{Results }We measure the accuracy on the original testset(\textit{Clean}), as well as the average (\textit{Avg.}) and  worst-case (\textit{Worst}) accuracies in the orbit of rotations discretized in steps of 1 degree, where `\textit{Worst}' counts an image as misclassified as soon as there exists a rotation at which the network makes a wrong prediction.

As we can see in Tab.~\ref{tab:rob.train_reg.}, networks trained without rotation augmentation perform poorly in terms of both, the average and worst-case accuracy if the data set contains an inherent orientation. While augmenting with rotations during training results in improvements, there is still a huge gap ($\sim 30\%$ for CIFAR10 and CUB200) between the average and worst-case accuracies. While adversarial training approaches ~\cite{engstrom2019exploring,yang2019invariance} improve the performance in the worst case, there is a clear drop in the clean and average accuracies when compared to data augmentation.
 Learned approaches to correct orientation i.e. STN~\cite{jaderberg2015transformer}, ETN~\cite{tai2019equivariant} show an improvement over adversarial training schemes in terms of average and worst case accuracies, when training from scratch, with ETN demonstrating even higher robustness than plain STNs.  
While pooling over features of rotated versions of image provides provable invariance to discrete rotations, this approach is still susceptible to continuous image rotations. The robustness of this approach to continuous rotations is boosted by rotation augmentation, with improvements over even learned transformers. Note that using TI-pooling with 4 rotated copies increases the computation by 4 times.  In contrast, our orbit mapping effortlessly leads to significant improvements in robustness even without augmenting with rotations, with performance  better than  adversarial training, learned transformers and discrete invariance based approaches. Since our orbit mapping for discrete images has some instabilities, our approach also benefits from augmentation with image rotations. Further, when combined with discrete invariant approach~\cite{laptev2016ti}, we obtain the best accuracies for average and worst case rotations.

Even when finetuning networks, we observe that orbit mapping readily improves robustness to rotations over standard training, even without the use of augmentations. Furthermore,  combination of orbit mapping  with the discrete invariant approach of pooling over rotated features yields the best performance.
For the birds dataset with inherent orientation, undoing rotations using ETN significantly improves robustness when compared to adversarial training schemes, which only marginally improve robustness over rotation augmentation. We found it difficult to train  STN with higher accuracies (\emph{Clean/Avg./Worst}) than plain augmentation with rotated images for CUB200 and HAM10000, despite extensive hyperparameter optimization, therefore we do not report the numbers here\footnote{\smaller{We use a single spatial transformer as opposed to multiple STNs used in \cite{jaderberg2015transformer} and train on randomly rotated images.}}.
When the data itself does not contain a prominent orientation as in the HAM10000 data set, the general trend in accuracies still holds (\emph{Clean$>$Avg.$>$Worst}), but the drops in accuracies are not drastic, and adversarial training schemes provide improvements over undoing transormations using ETN. Further, orbit mapping and pooling over rotated images provide comparable improvements in robustness, with their combination achieving the best results.\\
\textbf{Discretization Artifacts:~} It is interesting to see that while 
consistently selecting a single element from the continuous orbit of rotations leads to provable rotational invariance when considering images as continuous functions, discretization artifacts and boundary effects still play a crucial role in practice, and rotations cannot be fully stabilized. As a result, there is still discrepancy between the average and worst case accuracies, and the performance is further improved when our approach also uses rotation augmentation. Motivated by the strong effect the discretization seems to have, we investigate different interpolation schemes used to rotate the image in more detail: Tab.~\ref{tab:train_test} shows the results different training schemes with and without our orbit mapping (\textit{OM}) obtained with a ResNet-18 architecture on CIFAR-10 when using different types of interpolation. Besides standard training  (\textit{Std.}), we use rotation augmentation (\textit{RA}) using the Pytorch-default of nearest-neighbor interpolation, a combined augmentation scheme (\textit{RA-combined}) that applies random rotation only to a fraction of images in a batch using at least one nearest neighbor, one bilinear and one bicubic interpolation. The adversarial training and regularization from \cite{engstrom2019exploring,yang2019invariance}  are trained using bilinear interpolation (following the authors' implementation). 

Results show that interpolation used in image rotation impacts accuracies in all the baselines. Most notably, the worst-case accuracies between different types of interpolation may differ by more than $20\%$, indicating a huge influence of the interpolation scheme. Adversarial training with bi-linear interpolation still leaves a large vulnerability to image rotations with nearest neighbor interpolation. Further, applying an orbit mapping at test time to a network trained with rotated images readily improves its worst case accuracy, however, there is a clear drop in clean and average case accuracies, possibly due to the network not having seen doubly interpolated images during training. While our approach without rotation augmentation is also vulnerable to interpolation effects, it is ameliorated when using orbit mapping along with rotation augmentation. We observe that including different augmentations (RA-combined) improves the robustness significantly. Combining the orbit mapping with the discrete invariant approach~\cite{laptev2016ti}  boosts the robustness, with different augmentations further reducing the gap between clean, average case and worst case performance. 
\begin{table*}[t]
\small
\begin{center}
\resizebox{0.8\linewidth}{!}{
\begin{tabular}{ c c |c c c | c c c}
\hline
 \multirow{2}{*}{Train.} & \multirow{2}{*}{OM} &  \multicolumn{3}{c|}{D4/C4}& \multicolumn{3}{c}{D16/C16}\\
 &&Clean & Avg. & Worst&Clean & Avg. & Worst \\
\hline
Std. & \ding{55} & 98.73$\pm$0.04  & 98.61$\pm$0.04 & 96.84$\pm$0.08&99.16$\pm$0.03  & 99.02$\pm$0.04  & 98.19$\pm$0.08
\\
Std. &  \ding{51}(Train+Test)  &98.86$\pm$0.02  & 98.74$\pm$0.03 & 98.31$\pm$0.05&
99.21$\pm$0.01  & 99.11$\pm$0.03  & 98.82$\pm$0.06
 \\
  \hline
 RA. &\ding{55} &99.19$\pm$0.02  & 99.11$\pm$0.01 & 98.39$\pm$0.05&99.31$\pm$0.02  & 99.27$\pm$0.02  & 98.89$\pm$0.03
\\
 RA. & \ding{51}(Train+Test) & 98.99$\pm$0.03  & 98.90$\pm$0.01 & 98.60$\pm$0.02& 99.28$\pm$0.02  & 99.23$\pm$0.01  & 99.04$\pm$0.02
\\
\hline
\end{tabular}}
\end{center}
\caption{Effect of orbit mapping and rotation augmentation on RotMNIST classification using regular D4/C4 and D16/C16 E2CNN models.  Shown are clean accuracy on standard test set and average and worst-case accuracies on 
test set rotated in steps of 1 degree, with mean and standard deviations over 5 runs.\label{tab:rot_mnit_c4c16}}
\end{table*}

\hspace{-1.5em}\textbf{Experiments with RotMNIST} 
We investigate the effect of orbit mapping on  RotMNIST classification
  with the state of the art network from \cite{Weiler2019GeneralES} employing regular steerable  equivariant models\cite{weiler2018steering}. This model uses 16 rotations and flips of the learned filters (with flips being restricted till layer3).  We also compare with a variation of the same architecture with 4 rotations. We refer to these models as  D16/C16 and D4/C4 respectively. 
 We train and evaluate these models using their publicly available code\footnote{code url \url{https://github.com/QUVA-Lab/e2cnn_experiments}}. Results in Tab.~\ref{tab:rot_mnit_c4c16} indicate that even for these state of the art models, there is a discrepancy between the accuracy on the standard test set and the worst case accuracies, and their robustness can be further improved by orbit mapping.
  Notably, orbit mapping significantly improves worst case accuracy (by around 1.5\%) for D4/C4 steerable model trained without augmenting using rotations, showing gains in robustness even over naively trained D16/C16 model of much higher complexity. Training with augmentation leads to improvement in robustness, with  orbit mapping providing gains further in robustness. However, artifacts due to double interpolation affect  performance of orbit mapping.
\subsection{Invariances in 3D Point Cloud Classification}
\begin{table*}[t]
\small
\begin{center}
\resizebox{0.85\linewidth}{!}{
\begin{tabular}{ c c | l l l| l l l}
\hline
 \multirow{2}{*}{Augment.} & \multirow{2}{*}{Unscaling} &  \multicolumn{3}{c|}{with STN}& \multicolumn{3}{c}{without STN}\\
 \cline{3-5}\cline{6-8}
 &&Clean & Avg. & Worst&Clean & Avg. & Worst \\
\hline
 $[0.8, 1.25]$ & \ding{55} & 86.15$\pm$ 0.52 &24.40$\pm$1.56&0.01$\pm$0.02 &85.31$\pm$0.39&33.57$\pm$2.00& 2.37$\pm$0.06
\\
  $[0.8,1.25]$ &  \ding{51}(Train+Test)  & \textbf{86.15$\pm$ 0.28} &\textbf{86.15$\pm$ 0.28}&\textbf{86.15$\pm$ 0.28}  &85.25$\pm$0.43&85.25$\pm$0.43& 85.25$\pm$0.43
\\
 $[0.8, 1.25]$ & \ding{51}(Test) &  86.15$\pm$ 0.52 & 85.59$\pm$0.79&85.59$\pm$0.79 & 85.31$\pm$0.39&83.76$\pm$0.35& 83.76$\pm$0.35
 \\
  \hline
   $[0.1, 10]$ & \ding{55} & 85.40$\pm$0.46&47.25$\pm$1.36& 0.04$\pm$0.05 &75.34$\pm$0.84&47.58$\pm$1.69& 1.06$\pm$0.87
\\
       $[0.1, 10]$  & \ding{51}(Test)& 85.40$\pm$0.46&85.85$\pm$0.73& 85.85$\pm$0.73 & 75.34$\pm$0.84&81.45$\pm$0.56& 81.45$\pm$0.56\\
     \hline

 $[0.001, 1000]$ &\ding{55} &33.33$\pm$ 7.58&42.38$\pm$ 1.54& 2.25$\pm$0.22 &5.07$\pm$2.37&25.42$\pm$0.73& 2.24$\pm$0.11
\\
  $[0.001, 1000]$  & \ding{51}(Train+Test) & 85.66$\pm$ 0.39&85.66$\pm$ 0.39& 85.66$\pm$ 0.39 & 85.05$\pm$0.43&85.05$\pm$0.43& 85.05$\pm$0.43
\\
\hline\end{tabular}}
\end{center}
\caption{Scaling invariance in 3D pointcloud classification  with PointNet trained on modelnet40, with and without data augmentation, with and without STNs or scale normalization. 
Mean and standard deviations over 10 runs are reported.\label{tab:3d_invariances_scaling}}
\end{table*}
\begin{table*}[t]
\small
\begin{center}
\resizebox{0.75\linewidth}{!}{
\begin{tabular}{ c c c | l | l l | l l }
\hline
\multirow{2}{*}{RA} &  \multirow{2}{*}{ STN}& \multirow{2}{*}{PCA}& \multirow{2}{*}{Clean}& \multicolumn{2}{c}{Rotation}& \multicolumn{2}{c}{Translation}\\
\cline{5-8}
& &  & & Avg. & Worst& Avg. &Worst \\
\hline
 \ding{55} &\ding{51} & \ding{55} & \textbf{86.15$\pm$0.52}&10.37$\pm$0.18&0.09$\pm$0.07&10.96$\pm$1.22&0.00$\pm$0.00\\
 \ding{55} &\ding{55} & \ding{55} &85.31$\pm$0.39&10.59$\pm$0.25& 0.26$\pm$0.10&6.53$\pm$0.12 &0.00$\pm$0.00\\
  \ding{55} &\ding{51} & \ding{51}(Train+Test) &74.12$\pm$ 1.80 & 74.12$\pm$ 1.80&74.12$\pm$ 1.80&74.12$\pm$ 1.80&74.12$\pm$ 1.80\\
  \ding{55} &\ding{55} & \ding{51}(Train+Test) &75.36$\pm$0.70&\textbf{75.36$\pm$0.70}& \textbf{75.36$\pm$0.70}&\textbf{75.36$\pm$0.70 }&\textbf{75.36$\pm$0.70}
\\
 \hline
 \ding{51} &\ding{51} & \ding{55} & 72.13$\pm$ 5.84&72.39$\pm$ 5.60&35.91$\pm$ 4.87&5.35$\pm$0.98&0.00$\pm$0.00\\
 \ding{51} &\ding{55} & \ding{55}& 63.93$\pm$0.65&64.75$\pm$0.57& 45.53$\pm$0.29&3.90$\pm$0.71 &0.00$\pm$0.00\\
  \ding{51} &\ding{51} & \ding{51}(Test) &72.13$\pm$ 5.84&72.96$\pm$ 5.85&72.96$\pm$ 5.85&72.96$\pm$ 5.85&72.96$\pm$ 5.85\\
  \ding{51} &\ding{55} & \ding{51}(Test) & 64.56$\pm$0.91&64.56$\pm$0.91& 64.56$\pm$0.91&64.56$\pm$0.91 &64.56$\pm$0.91\\
  \ding{51} &\ding{51} & \ding{51}(Train+Test) & 72.84$\pm$0.77&  72.84$\pm$0.77 &  72.84$\pm$0.77 &  72.84$\pm$0.77 &  72.84$\pm$0.77\\
    \ding{51} &\ding{55} & \ding{51}(Train+Test) & 74.84$\pm$0.86&74.84$\pm$0.86& 74.84$\pm$0.86&74.84$\pm$0.86 &74.84$\pm$0.86\\
    \hline
\end{tabular}}
\end{center}
\caption{Rotation and translation invariances in 3D pointcloud classification  with PointNet trained on modelnet40, with and without rotation augmentation, with and without STNs or PCA. 
Mean and standard deviations over 10 runs are reported.\label{tab:3d_invariances_rotation}}
\end{table*}
\begin{table*}[t]
\begin{center}
 \resizebox{\linewidth}{!}{ 
\begin{tabular}{c c c|c| c| c | cc | cc | cc |cc}
\multicolumn{3}{c|}{Augmentation} &STN &OM & Clean & \multicolumn{2}{c|}{Scaling} &  \multicolumn{2}{c|}{Rotation} &  \multicolumn{2}{c}{Translation}\\
Scale & RA &Translation& & All & & Avg. & Worst & Avg. & Worst& Avg. & Worst \\
\hline
$[0.8, 1.25]$ & \ding{51} &$[-0.1,0.1]$&\ding{51}&\ding{55}& 72.13$\pm$ 5.84&19.74$\pm$ 4.01&0.16$\pm$ 0.42&72.39$\pm$ 5.60&35.91$\pm$ 4.87&5.35$\pm$0.98&0.00$\pm$0.00\\
$[0.8, 1.25]$ & \ding{51} &$[-0.1,0.1]$&\ding{51}&\ding{51} Test&67.38$\pm$ 7.96&64.88$\pm$ 12.16&64.88$\pm$ 12.16&64.88$\pm$ 12.16&64.88$\pm$ 12.16&64.88$\pm$ 12.16&64.88$\pm$ 12.16\\
$[0.8, 1.25]$ & \ding{51} &$[-0.1,0.1]$&\ding{51}& \ding{51} Train+Test&\textbf{77.52$\pm$1.03}& \textbf{77.52$\pm$1.03}&\textbf{77.52$\pm$1.03}&\textbf{77.52$\pm$1.03}&\textbf{77.52$\pm$1.03}&\textbf{77.52$\pm$1.03}&\textbf{77.52$\pm$1.03}\\
$[0.8, 1.25]$ & \ding{51}&$[-0.1,0.1]$ &\ding{55}& \ding{55}  &63.93$\pm$0.65&12.85$\pm$0.29& 0.27$\pm$0.55&64.75$\pm$0.57& 45.53$\pm$0.29&3.90$\pm$0.71 &0.00$\pm$0.00
\\
$[0.8, 1.25]$ & \ding{51}&$[-0.1,0.1]$ &\ding{55}& \ding{51}Test & 64.71$\pm$0.92&57.10$\pm$1.14& 57.10$\pm$1.14&57.10$\pm$1.14& 57.10$\pm$1.14&57.10$\pm$1.14 &57.10$\pm$1.14\\
$[0.8, 1.25]$ & \ding{51}&$[-0.1,0.1]$ &\ding{55}& \ding{51}Train+Test & 74.41$\pm$0.58&74.41$\pm$0.58& 74.41$\pm$0.58&74.41$\pm$0.58& 74.41$\pm$0.58&74.41$\pm$0.58 &74.41$\pm$0.58\\
\hline
\end{tabular} }
\end{center}
\caption{Combined Scale, rotation and translation invariances in 3D pointcloud classification  with PointNet trained on modelnet40, with data augmentation and analytical inclusion of each invariance. 
Mean and standard deviations over 10 runs are reported. \label{tab:3d_invariances_sim}}
\end{table*}
Invariance to orientation and scale is often desired in networks classifying objects given as $3$D point clouds. 
Popular architectures, such as PointNet \cite{qi2017pointnet} and its extensions \cite{qi2017pointnetpp}, rely on the ability of spatial transformer networks to learn such invariances  by training on large datasets and extensive data augmentations. 
We analyze the robustness of these networks to transformations  with experiments using Pointnet on \textit{modelnet40} dataset \cite{wu2015shapenets}. We compare the class accuracy of the final iterate for the clean validation set \textit{(Clean)}, and transformed validation sets  in the average \textit{(Avg.)} and worst-case \textit{(Worst)}.
We show that PointNet performs better with 
our orbit mappings than with augmentation alone.

In this setting, $\inputSpace = \R^{d \times N}$ are $N$ many $d$-dimensional coordinates (usually with $d=3$). The desired group actions for invariance are left-multiplication with a 
rotation matrix, and multiplication with any number $c\in\R^+$ to account for different scaling. We also consider translation by adding a fixed coordinate $c_t \in \mathbb{R}^3$ to each entry in $\inputSpace$. 
Desired invariances in point cloud classification range from class-dependent variances to geometric properties. 
For example, the classification of airplanes should be invariant to the specific wing shape, as well as the scale or translation of the model. 
While networks can learn some invariance from training data, our experiments show that even simple transformations like scaling and translation are not learned robustly outside the scope of what was provided in the training data, see Tabs.~\ref{tab:3d_invariances_scaling}, \ref{tab:3d_invariances_rotation}, \ref{tab:3d_invariances_sim}. 
This is surprising, considering that both can be undone by centering around the origin and re-scaling.\vspace{0.2em}\\
\textbf{Scaling}
Invariance to scaling can be achieved in the sense of Sec.~\ref{sec:proposedApproach} by scaling input point-clouds by the average distance of all points to the origin. Our experiments show that this leads to robustness against much more extreme transformation values without the need for expensive training, both for average as well as worst-case accuracy.
We tested the worst-case accuracy on the following scales: $\lbrace 0.001, 0.01, 0.1, 0.5, 1.0, 5.0, 10, 100, 1000 \rbrace$. 
While our approach performs well on all cases, training PointNet on random data augmentation in the range of possible values actually reduces the accuracy on clean, not scaled test data. This indicates that the added complexity of the task cannot be well represented within the network although it includes  spatial transformers. 
Even when restricting the training to a subset of the interval of scales, the spatial transformers cannot fully learn to undo the scaling, resulting in a significant drop in  average and worst-case robustness, see Tab.~\ref{tab:3d_invariances_scaling}. 
While training the original Pointnet including the desired invariance in the network achieves the best performance, dropping the spatial transformers from the architecture results in only a tiny drop in accuracy with significant gains in training and computation time\footnote{\smaller{Model size of PointNet with STNs is 41.8 MB, and without STNs 9.8 MB}}.
This either indicates that in the absence of rigid deformation the spatial transformers do not add much knowledge and is strictly inferior to modeling invariance, at least on this dataset.\vspace{0.2em}\\
\textbf{Rotation and Translation} In this section, we show that 3D rotations and translations exhibit a similar behavior and can be more robustly treated via 
orbit mapping than through data augmentation. 
This is even more meaningful than scaling as both have three degrees of freedom and sampling their respective spaces requires a lot more examples. 
For rotations, we choose the unique  element of the orbit to be the rotation of $\inputSpace$ that aligns its principle components with the coordinate axes. 
The optimal transformation involves subtracting the center of mass from all coordinates and then applying the singular value decomposition $X=U\Sigma V$ of the point cloud $X$ up to the arbitrary orientation of the principle axes, a process also known as PCA. Rotation and translation can be treated together, as undoing the translation is a substep of PCA.
To remove the sign ambiguity in the principle axes, we choose signs of the first row of $U$ and encode them into a diagonal matrix $D$, such that the final transform is given by $\hat{X} = XV^\top D$. 
We apply this rotational alignment to PointNet with and without spatial transformers and evaluate its robustness to rotations in average-case and worst-case when rotating the validation dataset in $16 \times 16$ increments (i.e. with $16$ discrete angles along each of the two angular degrees of freedom of a 3D rotation).  We test robustness to translations in average-case and worst-case for the following shifts in each of \emph{x, y} and \emph{z} directions: $\lbrace-10.0, -1.0, -0.5, -0.1, 0.1, 0.5, 1.0, 10.0\rbrace$.
Tab.~\ref{tab:3d_invariances_rotation} shows that PointNet trained without augmentation is susceptible in worst-case and average-case rotations and even translations.
The vulnerability to rotations can be ameliorated in the average-case by training with random rotations, but the worst-case accuracy is still significantly lower, even when spatial transformers are employed. Also notable is the high variance in performance of Pointnets with STNs trained using augmentations.
On the other hand, explicitly training and testing with stabilized rotations using PCA does provide effortless invariance to rotations and translations, even without augmentation. Interestingly, the best accuracy here is reached when training PointNet entirely without spatial transformers, which offer no additional benefits when the rotations are stabilized.
The process for invariance against translation is well-known and well-used due to its simplicity and robustness. We show that this approach arises naturally from our framework, and that its extension to rotational invariance inherits the same numerical behavior, i.e., provable invariance outperforms learning to undo the transformation via data augmentation.\vspace{0.2em}\\
\textbf{Combined invariance to Scaling, Rotation, Translation. } Our approach can be extended to make a model simultaneously invariant to scaling, rotations and translations. In this setup, we apply a PCA alignment before normalizing the scale of input point cloud. Tab.~\ref{tab:3d_invariances_sim} shows that PointNet trained with such combined orbit mapping does achieve the desired invariances.
\section{Discussion and Conclusions}
We proposed a simple and general way of incorporating invariances to group actions in neural networks by uniquely selecting a specific element from the orbit of group transformations. This guarantees provable invariance to group transformations for 3D point clouds, and  demonstrates significant improvements in robustness to continuous rotations of images  with a limited computational overhead. However, for images, a large discrepancy between the theoretical provable invariance (in the perspective of images as continuous functions) and the practical discrete setting remains. We conjecture that this is related to discretization artifacts when applying rotations that change the gradient directions, especially at low resolutions.
Notably, such artifacts appear  more frequently in artificial settings, e.g. during data augmentation or when testing for worst-case accuracy, than in photographs of rotating objects that only get discretized once. While we found a consistent advantage of enforcing the desired invariance via orbit mapping rather than training alone,  combination of data augmentation and orbit mappings yields additional advantages (in cases where discretization artifacts prevent a provable invariance of the latter). Moreover, our orbit mapping can be combined with existing invariant approaches for improved robustness.
\appendix
\section{Extension of Orbit Mapping to Equivariant Networks}
The \textit{equivariance} of $\net$  preserves the structure  of transformations $g\in \invarianceSet$ of input data in the elements $y \in \outputSpace$ (including, but not limited to, the case where $\inputSpace \equiv \outputSpace$). The \textit{equivariance} of $\net$ to $\invarianceSet$ is defined as
\begin{align}
\label{eq:equivariance}
    \net(g(x);\theta) = g(\net(x;\theta)) \quad \forall x \in \inputSpace, ~ g \in \invarianceSet, ~ \theta \in \R^p.
\end{align}
We now show that  equivariant networks can be designed  by applying all transformations in $\invarianceSet$ to the input $x$.
\begin{proposition}
\label{fact:equivariance}
Let $\invarianceSet$ define a group action on $\inputSpace$. A network $\net$ is equivariant under the group action of $\invarianceSet$ if it can be written as 
\begin{align}
    \label{eq:fact2}
\net(x;\theta) =  \netTwo(\{g (\netThree (g^{-1}(x);\theta_2)) ~|~ g \in \invarianceSet\};\theta_1)
\end{align}
for some other arbitrary network $\netThree: \inputSpace \times \R^{p_2} \rightarrow \inputSpace$, and a network $\netTwo: 2^\inputSpace \times \R^{p_1} \rightarrow \inputSpace$ that commutes with any element $h \in \invarianceSet$, i.e., for $h \in \invarianceSet$, and $Z\subset \inputSpace$, it satisfies
$\netTwo(h(Z);\theta_2) = h(\netTwo(Z;\theta_2))$, where $h(Z)$ denotes the set obtained by the applying $h$ to every element of $Z$. 
\end{proposition}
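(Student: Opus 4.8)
The plan is to prove the stated \textbf{sufficiency} claim: assuming $\net$ is written in the form \eqref{eq:fact2}, I would verify the equivariance identity $\net(h(x);\theta) = h(\net(x;\theta))$ directly, for an arbitrary fixed $h \in \invarianceSet$, arbitrary $x \in \inputSpace$, and arbitrary parameters. The argument rests on two ingredients that I would treat in turn: (i) pulling the outer transformation $h$ through the pooling network $\netTwo$ using its assumed commuting property, and (ii) reindexing the orbit-indexed set that $\netTwo$ receives, using the group structure of $\invarianceSet$. Throughout, I note that since $\netThree$ maps into $\inputSpace$ and $\netTwo$ maps $2^\inputSpace$ into $\inputSpace$, every object the action is applied to indeed lies in $\inputSpace$, so each $g(\cdot)$ is well defined.

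First I would substitute $h(x)$ for $x$ in \eqref{eq:fact2} to obtain
\[
\net(h(x);\theta) = \netTwo\bigl(\{\, g(\netThree(g^{-1}(h(x));\theta_2)) \mid g \in \invarianceSet \,\};\theta_1\bigr).
\]
Separately, applying the commuting hypothesis $\netTwo(h(Z);\theta_1) = h(\netTwo(Z;\theta_1))$ to the set $Z = \{\, g(\netThree(g^{-1}(x);\theta_2)) \mid g \in \invarianceSet \,\}$ yields
\[
h(\net(x;\theta)) = \netTwo\bigl(\{\, h(g(\netThree(g^{-1}(x);\theta_2))) \mid g \in \invarianceSet \,\};\theta_1\bigr).
\]
It then remains only to show that the two argument sets fed into $\netTwo$ coincide, after which the desired identity follows immediately.

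For the set identity I would reindex via $g = h\tilde{g}$, equivalently $\tilde{g} = h^{-1}g$. Since $\invarianceSet$ is a group, $\tilde{g}$ ranges over all of $\invarianceSet$ exactly as $g$ does, so the reindexing is a bijection of the index set. Using associativity of the action together with $h^{-1}h = e$, I would compute $g^{-1}(h(x)) = \tilde{g}^{-1}(h^{-1}(h(x))) = \tilde{g}^{-1}(x)$ and $g(\cdot) = h(\tilde{g}(\cdot))$, so that each element rewrites as
\[
g(\netThree(g^{-1}(h(x));\theta_2)) = h(\tilde{g}(\netThree(\tilde{g}^{-1}(x);\theta_2))).
\]
Hence the first set equals $\{\, h(\tilde{g}(\netThree(\tilde{g}^{-1}(x);\theta_2))) \mid \tilde{g} \in \invarianceSet \,\}$, which is precisely the second set (the names of the bound index being immaterial). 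Combining this with the two displays above gives $\net(h(x);\theta) = h(\net(x;\theta))$, and since $h$, $x$, and the parameters were arbitrary, equivariance holds.

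I expect the \emph{reindexing} in the third step to be the only genuine obstacle: it is where all the group axioms are actually used (existence of inverses so that $\tilde g = h^{-1}g$ makes sense, the homomorphism/associativity property $\sigma(g,\sigma(h,x)) = \sigma(gh,x)$ to simplify the composed actions, and $\sigma(e,x)=x$). The commuting property of $\netTwo$ is used as a black box, and the substitution step is purely formal. I would also emphasize that the proposition asserts only an \emph{if}, not an \emph{iff}: unlike \cref{fact:invariance}, only sufficiency is claimed and proved here, so no converse construction is needed.
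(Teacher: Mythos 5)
Your proof is correct and follows essentially the same route as the paper's: both hinge on the reindexing $\tilde{g} = h^{-1}g$ (the paper calls it $z$) over the group, followed by pulling $h$ out of $\netTwo$ via the assumed commuting property. The only difference is organizational — you expand $\net(h(x);\theta)$ and $h(\net(x;\theta))$ separately and meet in the middle, while the paper chains the equalities in one direction — which is immaterial.
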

\begin{proof}
We want to show that a network satisfying the 
condition (5) is equivariant. Let $h \in S$ be arbitrary. Note that  
\begin{align}
    \{g ~|~ g\in  \invarianceSet\} = 
    \{h^{-1}g ~|~ g \in \invarianceSet\} 
\end{align}
such that a substitution of variables from $g \in \invarianceSet$ to $z = h^{-1}g \in \invarianceSet$ (i.e., $g = hz$ and $z^{-1} = g^{-1}h$) yields
\begin{align*}
    &\{g (\netThree (g^{-1}(h(x));\theta_2)) ~|~ g \in \invarianceSet\} \\
    =& 
    \{h( z (\netThree (z^{-1}(x);\theta_2))) ~|~ z \in \invarianceSet\}.
\end{align*}
This means that we can also write  \begin{align*}
    \net(h(x);\theta) &=  \netTwo(\{h( z (\netThree (z^{-1}(x);\theta_2))) ~|~ z \in \invarianceSet\};\theta_1)\\
    &=  \netTwo(h(\{z (\netThree (z^{-1}(x);\theta_2)) ~|~ z \in \invarianceSet\});\theta_1)\\
    &=  h(\netTwo(\{z (\netThree (z^{-1}(x);\theta_2)) ~|~ z \in \invarianceSet\});\theta_1)\\
    &= h(\net(x;\theta))
\end{align*}
which yields the desired equivariance under the assumed commutative property.
\end{proof}
 The work \cite{cohen2016group} can be interpreted as an instance of the construction in \cref{fact:equivariance}, where equivariant linear layers w.r.t. rotations by 90 degrees are obtained by choosing $\netThree$ to be a simple convolution and $\netTwo$ to be the summation over all (finitely many) elements of the set. Subsequently, they nest these layers with component-wise (and therefore inherently equivariant) non-linearities.
 
 While Proposition 1 is stated for general groups, realizations of such constructions often rely on the ability to list an entire orbit of the group. In the following we show an efficient solution to obtain equivariant networks using orbit mapping.
 \begin{proposition}[Orbit mapping for equivariant networks]
\label{fact:argmaxInvariance}
Let $\orbitMapping$ be an orbit mapping that satisfies  $\orbitMapping(\invarianceSet \cdot x) \in \invarianceSet \cdot x$ for all $x$. Any network $\net: \inputSpace \times \R^p \rightarrow \inputSpace$ that can be written as
\begin{align}
    \label{eq:equivariantArgmaxNets}
    \net(x;\theta) = \hat{g}^{-1}(\netThree(\hat{g}(x);\theta))
\end{align}
for an arbitrary network $\netThree: \inputSpace \times \R^p \rightarrow \inputSpace$ and $\hat{g} \in \invarianceSet$ denoting the element that satisfies $\hat{g}(x) = \orbitMapping(\invarianceSet \cdot x)$ is equivariant. 
\end{proposition}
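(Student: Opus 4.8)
The plan is to verify the equivariance identity \eqref{eq:equivariance}, namely $\net(h(x);\theta) = h(\net(x;\theta))$ for every $h \in \invarianceSet$, directly from the defining formula \eqref{eq:equivariantArgmaxNets}. The single fact that drives everything is the one already used to justify \cref{fact:invariance}: since $\invarianceSet$ is a group, the orbit is invariant under the action, $\invarianceSet \cdot h(x) = \invarianceSet \cdot x$. Consequently the orbit mapping returns the \emph{same} canonical element whether it is fed $x$ or $h(x)$, i.e. $\orbitMapping(\invarianceSet \cdot h(x)) = \orbitMapping(\invarianceSet \cdot x)$.

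First I would name the selecting group elements. Let $\hat{g}$ be the element attached to $x$, so that $\hat{g}(x) = \orbitMapping(\invarianceSet \cdot x)$, and let $\hat{g}_h$ be the element attached to the transformed input $h(x)$, so that $\hat{g}_h(h(x)) = \orbitMapping(\invarianceSet \cdot h(x))$. By the previous paragraph the right-hand sides coincide, giving $\hat{g}_h(h(x)) = \hat{g}(x)$. Using the group-action axioms this is satisfied by $\hat{g}_h = \hat{g}\,h^{-1}$, since applying $\hat{g}\,h^{-1}$ to $h(x)$ returns $\hat{g}(h^{-1}(h(x))) = \hat{g}(x)$ by axiom (ii).

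Then the computation is a short substitution: inserting $\hat{g}_h = \hat{g}\,h^{-1}$ into \eqref{eq:equivariantArgmaxNets} evaluated at $h(x)$ gives
\begin{align*}
\net(h(x);\theta) &= \hat{g}_h^{-1}\big(\netThree(\hat{g}_h(h(x));\theta)\big)\\
&= (h\,\hat{g}^{-1})\big(\netThree(\hat{g}(x);\theta)\big)\\
&= h\big(\hat{g}^{-1}(\netThree(\hat{g}(x);\theta))\big)\\
&= h(\net(x;\theta)),
\end{align*}
where the third line uses $(\hat{g}\,h^{-1})^{-1} = h\,\hat{g}^{-1}$ together with axiom (ii) to split the composite action. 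This is exactly \eqref{eq:equivariance}.

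The step I expect to require the most care is the inference $\hat{g}_h(h(x)) = \hat{g}(x) \Rightarrow \hat{g}_h = \hat{g}\,h^{-1}$: when the action is not free this equality of images does not pin down $\hat{g}_h$ uniquely as a group element, so one must be precise about what ``the element that satisfies $\hat{g}(x) = \orbitMapping(\invarianceSet\cdot x)$'' means, since otherwise even $\net(x;\theta)$ itself is ambiguous. The clean way around this is to require that the selection rule underlying $\orbitMapping$ be consistent across the orbit, i.e. that it assign to $h(x)$ precisely the element $\hat{g}\,h^{-1}$; this is always an admissible choice, and with it the chain of equalities above goes through verbatim. Everything else is routine bookkeeping with the group-action axioms.
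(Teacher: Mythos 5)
Your proof is correct and follows essentially the same route as the paper's own proof: both arguments use the orbit invariance $\invarianceSet \cdot h(x) = \invarianceSet \cdot x$ to conclude that the orbit mapping selects the same canonical element, deduce that the selecting group element for the transformed input is $\hat{g}\,h^{-1}$, and then substitute into \eqref{eq:equivariantArgmaxNets} to obtain equivariance. Your closing caveat about non-free actions even addresses a point the paper's proof glosses over (it simply ``solves'' $\tilde{g}(a)=\hat{g}(x)$ for $\tilde{g}$ as if the solution were unique), so your version is, if anything, slightly more careful.
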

\begin{proof}
We want to show that a network satisfying the 
condition \eqref{eq:equivariantArgmaxNets} is equivariant. Consider an input $a = r(x)$ to the network, where $r$ denotes an arbitrary element of $S$. We first need to determine the element $\tilde{g} \in S$ such that $\tilde{g}(a) = h(S\cdot a)$. From the definition of the orbit, it follows that $S \cdot x = S \cdot r(x)$, such that our orbit mapping satisfies remains the same, i.e., $h(S \cdot x) = h(S \cdot a) = \hat{g}(x)$. Solving the equation  $\tilde{g}(a) = \hat{g}(x)$ with $a = r(x)$, i.e., $x = r^{-1}(a)$ for $\tilde{g}$ yields $\tilde{g} = \hat{g} r^{-1}$. Now it follows that
\begin{align*}
\net(r(x);\theta) = \net(a;\theta) &= \tilde{g}^{-1}(\netThree( \tilde{g}(a);\theta))\\
& = r(\hat{g}^{-1}(\netThree( \tilde{g}(a);\theta)))\\
& = r(\hat{g}^{-1}(\netThree(\hat{g}(x);\theta)))\\
& = r(\net(x;\theta)),
\end{align*}
which concludes the proof.
\end{proof}

\section{A Discussion on Isometry Invariance}

Here, we will elaborate on how the functional map framework [29] can be seen as an application of our orbit mapping for isometry invariance.
Functional maps are a widely used method to find correspondences between isometric shapes, and we will show here that the framework fits within our proposed theory. 
Non-rigid correspondence is a notoriously hard problem, and joint optimization within a larger framework makes it even more complex.
To resolve this the idea of functional maps is to change the representation of the correspondence from point-wise to function-wise. 
By choosing the eigenfunctions of the Laplace-Beltrami operator [31]  as the basis for functions on the shapes, the problem becomes a least squares problem aligning suitable descriptor functions in the space of functions. 

Here, $F \in \mathcal{F}(\mathcal{X})$ and $G \in \mathcal{F}(\mathcal{Y})$ are descriptor functions on the shapes $\mathcal{X}$ and $\mathcal{Y}$ respectively. They are assumed to take similar values on corresponding points on $\mathcal{X}, \mathcal{Y}$, and generate the designated orbit element within our framework. 
These descriptors are projected onto the eigenfunctions of $\mathcal{X}, \mathcal{Y}$, named $\Phi, \Psi$ respectively. 
These projections are the chosen elements of the orbit we will align, and, for isometries and sufficiently comparable descriptors, the projections can be aligned by an orthogonal transformation generating the group action which is exactly the functional map $C$.
The vanilla functional map optimization looks like this:

\begin{align}
    \underset{C \in O(k)}{\arg\min} \Vert C \Phi^{-1} F - \Psi^{-1} G \Vert_2^2
\end{align}

Functional maps are often used when shape correspondence is required within another framework, and has been used in many deep learning applications [7],[16],[22]. 
Due to its wide application, we will not provide extra experiments to show its efficacy but want to emphasize that this is a possible implementation of our theory. 
\section{Stability of gradient based orbit mapping}
In this section we analyze the stability of the proposed gradient based orbit mapping strategy for discrete images. While the proposed gradient based orbit mapping our approach leads to unique orientation as long as $\int_Z \nabla u(z) ~dz$ is non-zero, practically, the magnitude of $\int_Z \nabla u(z) ~dz$ and interpolation artifacts affect the stability of the orbit mapping. While one could possibly use  forward or central differences to calculate gradients at pixels along approximate circles, this further deteriorates the stability of orbit mapping. This is seen in  Tab.~\ref{tab:stability} a) which shows the mean standard  deviation orientation of orbit-mapped images when input images rotated in steps  of 1 degree using bilinear interpolation. We find that using forward differences to approximate the gradient has the most instability. In the following section, we derive a necessary condition for provable invariance using general convolution kernels (instead of gradients in \textit{x} and \textit{y} direction), where we show that forward differences does not satisfy these conditions for any rotation.

\begin{table}[htb]
    \centering
    \begin{minipage}{0.45\textwidth}
        \centering
        \begin{tabular}{c}
                      a)~\hspace{-1pt}\resizebox{0.99\linewidth}{!}{\begin{tabular}{cccc}
        Dataset&Exact&Central Diff.&Forward Diff\\
        \hline
        CIFAR10 & 10.46&12.47&23.89\\
        CUB200 &9.05&14.56&24.75\\
        \hline
    \end{tabular}}\\
             \\
    c)~\hspace{-1pt}\resizebox{0.99\linewidth}{!}{\begin{tabular}{ccccc}
     Dataset&clean&$\sigma^2$=0.01&$\sigma^2$=0.05&$\sigma^2$=0.1\\
    \hline
    CIFAR10 & 10.46&11.36&14.08&16.69\\
    CUB200 &9.05&10.55&15.99&20.610\\
    \hline
    \end{tabular}}
        \end{tabular}
\end{minipage}%
    \begin{minipage}{0.55\textwidth}
        \centering
       \hspace{2pt} b)~~~~~~~~~~~~~~~~~~~~~~~~~~~~~~~~~~~~~~~~~~\\
       ~~~\includegraphics[width=0.95\linewidth]{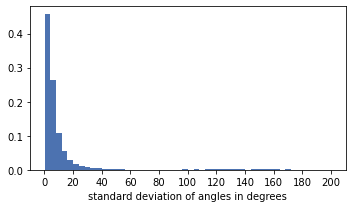}
    \end{minipage}
        \caption{Stability and  robustness of proposed gradient based Orbit Mapping strategy. a)~The mean standard deviation values of angles in degrees over the images in dataset are reported when rotating images based on exact gradients computed along circle using bilinear interpolation, and approximate gradients using finite differences along pixels closest to the circle. b)~The histogram of standard deviations of the predicted orientation in degrees for CIFAR10. c)~The mean standard deviation values of angles in degrees over the images in CIFAR10 dataset are reported, for different levels of additive Gaussian noise.}
    \label{tab:stability}
\end{table}

Tab.~\ref{tab:stability} b) shows the histogram of standard deviations in orientation for CIFAR10 images when calculating exact gradients along the circle. The standard deviations of predicted orientations of over 78\% of the images is less than 10 degrees, and over 44\% of images is less than 4 degrees, indicating a relatively stable orbit mapping for these images. However, a fraction of images also have a higher variance, in predicted orientation possibly due to small values of the integral. Tab.~\ref{tab:stability} c) shows that our gradient based orbit mapping is fairly robust to small additive Gaussian noise.
\section{Invariance to image rotations using convolution kernels}
Following the notation from the paper, let $u(z)$ denote the continuous image function with $z \in \R^2$ representing the spatial coordinates of an image. The invariance set for the orbit of continuous image rotations is 
\begin{align*}
    \begin{split}
    & \invarianceSet = \{g:\inputSpace \rightarrow \inputSpace ~|~ g(u)(z) = (u\circ r(\alpha))(z),
    \text{ for } \alpha \in \R \},\\
   & \text{and }  r(\alpha) = \begin{pmatrix}\cos(\alpha) & -\sin(\alpha)\\ \sin(\alpha) & \cos(\alpha) \end{pmatrix} \text{is the rotation matrix}.
    \end{split}
\end{align*}
Let us consider two kernels $k_i:\R^2 \rightarrow \R$, $i=\{1,2\}$. We now investigate the convolution of a kernel with a rotated image $ (u\circ r(\alpha))(z)$
\begin{align*}
\begin{split}
   \left(k_i * u\circ r(\alpha)\right)(z) = \int_{\R^2} k_i(x)(u\circ r(\alpha))(z-x)dx\\
   =\int_{\R^2} k_i(x)u(r(\alpha)z-r(\alpha)x)dx\\
   = \int_{\R^2} k_i(r^T \varphi)u(r(\alpha)z-\varphi)d\varphi \\\text{ with } \varphi=r(\alpha)x
\end{split}
\end{align*}
Now assume 
\begin{align}
    \label{eq:condition_kernels}
    \begin{split}
   &\begin{pmatrix} k_1(r^T(\alpha) \varphi) \\k_2(r^T(\alpha) \varphi)\end{pmatrix}= r^T(\alpha)\begin{pmatrix}k_1( \varphi) \\k_2(\varphi)\end{pmatrix}.
    \end{split}
\end{align}
Then
\begin{align*}
    \begin{split}
   \begin{pmatrix} \left(k_1 *( u\circ r(\alpha))\right)(z) \\\left(k_2 * (u\circ r(\alpha))\right)(z)\end{pmatrix}=\int_{\R^2} r^T(\alpha)\begin{pmatrix}k_1( \varphi) \\k_2(\varphi)\end{pmatrix}u(r(\alpha)z-\varphi)d\varphi.\\
  =r^T(\alpha)\begin{pmatrix}(k_1*u)(r(\alpha) z) \\(k_2*u)(r(\alpha) z)\end{pmatrix}\\
    \end{split}
\end{align*}
Then for a suitable set $Z$ which makes the integral rotationally invariant, (e.g. circles around image center)
\begin{align}
 \begin{split}
 \int_Z \begin{pmatrix} \left(k_1 *( u\circ r(\alpha))\right)(z) \\\left(k_2 *( u\circ r(\alpha))\right)(z)\end{pmatrix} dz 
  =  r^T(\alpha)\int_{Z} \begin{pmatrix}(k_1*u)(\varphi) \\(k_2*u)(\varphi)\end{pmatrix}d\varphi
 \end{split}
\end{align}
And we can determine the optimal rotation as solution to
\begin{align}
\hat{g} = \argmax_{g \in \invarianceSet}& \left\langle \begin{pmatrix}1\\0 \end{pmatrix}, \int_Z \begin{pmatrix}
  k_1*u\\k_2*u
\end{pmatrix}(z) ~dz  \right\rangle
\end{align} 
whose solution is given by  $\hat{\alpha}$ such that
\begin{align}
 \begin{pmatrix}
   \cos\hat{\alpha} \\\sin\hat{\alpha}
 \end{pmatrix}
 =  \frac{\int_Z \begin{pmatrix}
  k_1*u\\k_2*u
\end{pmatrix}(z) ~dz}{\left\lVert\int_Z \begin{pmatrix}
  k_1*u\\k_2*u
\end{pmatrix}(z) ~dz\right\rVert} 
\label{eq:kernelRotation}
\end{align}
We can see that \eqref{eq:condition_kernels} is a necessary condition to ensure invariance to image rotations using orbit mapping with \eqref{eq:kernelRotation} employing convolution kernels $k_1$ and $k_2$. For discrete convolution kernels, eq.~\eqref{eq:condition_kernels} is not exactly satisfied for arbitrary rotations due to discretization problem. We can deduce necessary conditions on discrete kernels $k_1$ and $k_2$ to satisfy eq.~\eqref{eq:condition_kernels} for rotations in multiples of $90^o$. For square kernels $k_1$ and $k_2$ of size $N\times N$, we find that 
\begin{align}
\label{eq:k1}
   k_1[i,j]= k_1[N-i+1,N-j+1] \text{ and}\\
   k_2 = k_1\circ r(-90^o)
   \label{eq:k2}
\end{align}
are necessary to satisfy the condition \eqref{eq:condition_kernels} for $\alpha =90^o$.\\
For $N=2$, this gives kernels of the form
\begin{align*}
    k_1 =\begin{pmatrix}
      a&b\\-b &-a
    \end{pmatrix} \text{ and }
    k_2 =\begin{pmatrix}
      -b&a\\-a& b
    \end{pmatrix} 
\end{align*}
For $N=3$,
\begin{align*}
    k_1 =\begin{pmatrix}
      a&b&c\\d&0&-d\\-c&-b &-a
    \end{pmatrix} \text{ and }
    k_2 =\begin{pmatrix}
      -c&d&a\\-b&0&b\\-a&-d&c
    \end{pmatrix} 
\end{align*}
Note that computing gradients using central differences satisfies \eqref{eq:k1} and \eqref{eq:k2}, whereas using forward differences does not satisfy these conditions. Therefore, we observe more instabilities in orbit mapping when forward differences are used for gradient computation, see Tab.~\ref{tab:stability}.
\section{Details about the Experimental Setting}
In the following we provide the detailed training settings used in our experiments.
\subsection{Rotation invariance for images}
For our experiments with image rotational invariance, we used  Pytorch(v.1.8.1), python(v.3.8.8), torchvision(v.0.9.1). The exact training protocol is provided below.\\
\textbf{CIFAR10} We trained a Resnet18~\cite{he2016deep} on the CIFAR 10 dataset, using stochastic gradient descent with initial learning rate 0.1, momentum 0.9, and weight decay 5e-4. Additionally, we trained a small Convnet and a linear model which used an initial learning rate of 0.01.  For all the models, the learning rate is decayed by a factor of 0.5 whenever the validation loss does not decrease for 5 epochs.  Training data is augmented using random horizontal flips, random crops of size 32 after zero-padding by 4 pixels. We divide the training data into train (80\%) and validation (20\%) sets. Networks are trained for 150 epochs with a batch size of 128 and we report the results on the test set using the model with best validation accuracy. The experiments with CIFAR10 were performed partially on a machine with one Nvidia TITAN RTX, and partially on machine with 4 NVIDIA GeForce RTX 2080  GPUs.\\
\textbf{HAM10000} We fine-tuned an imagenet pretrained\footnote{ pretrained model from \url{https://github.com/rwightman/pytorch-image-models} licensed Apache 2.0} NFNet-F0~\cite{brock2021high} on HAM10000 dataset \cite{tschandl2018ham10000}. The dataset is split into  8912 train and 1103 validation images using stratified split, ensuring there are no duplicates with the same lesion ids in the train and validation sets. Training data is augmented using random horizontal and vertical flips and color jitter, and randomly oversample the minority classes to mitigate class imbalance.  The network is finetuned for 5 epochs, with a batch size of 128 and learning rate  of 1e-4, weight decay of 5e-4 using Adam optimizer~\cite{kingma2014adam}  with exponential learning rate decay, with factor 0.2.  For training using TI-pool which uses 4 rotated copies of images, we reduce the batch size to 32 to fit the GPU memory. For experiments with STN we use a 3 layered CNN with convolution filers of size $3\times3$ followed by 2 fully connected layers for pose prediction. For experiment with ETN we use a CNN with 4 conv layers with 64 channels and 2 fully connected layers for pose prediction. We report results using final iterate on the validation set. The experiments with HAM10000 dataset were partially performed on a machine with one NVIDIA TITAN RTX card, and partially on machine with 4 NVIDIA GeForce RTX 2080  GPUs.\\
\textbf{CUB200}
This is a small  dataset  containing 11,788 images of birds, split into 5994 images for training and 5794 test images. Since training a network from scratch gives low accuracies (around 35\% clean accuracy with Resnet-50), we instead perform finetuning using an imagenet pretrained Resnet-50 from pytorch torchvision~(v.0.9.1)  on CUB-200 dataset ~\cite{wah2011caltech}. The training data is augmented using random horizontal flips, random resized crops of size 224.  The network is finetuned for 60 epochs with batch size of 128 and initial learning rate of 1e-4, using Adam optimizer~\cite{kingma2014adam} , weight decay of 5e-4, with exponential learning rate decay, with factor 0.9. . For training using TI-pool which uses4 rotated copies of images, we reduce the batch size to 64 to fit in the GPU memory. For experiment with ETN we use a CNN with 4 conv layers with 64 channels and 2 fully connected layers for pose prediction.  We report the accuracies using the final iterate on the test set. The experiments on CUB-200 dataset were performed on machine with 4 NVIDIA GeForce RTX 2080  GPUs. 

All the  three image  datasets including  HAM10000 dataset \cite{tschandl2018ham10000}  used in our experiments are publicly available and widely used in machine learning literature. To the best of our knowledge these do not contain offensive content or personally identifiable information. 

\subsection{Rotation and Scale invariance for 3D point clouds}
We investigate invariance to rotations and scale for 3D point clouds with the task of point cloud classification on the \textit{modelnet40} dataset \cite{wu20153d}. For this dataset note the asset descriptions at \url{https://modelnet.cs.princeton.edu/}: "All CAD models are downloaded from the Internet and the original authors hold the copyright of the CAD models. The label of the data was obtained by us via Amazon Mechanical Turk service and it is provided freely. This dataset is provided for the convenience of academic research only." We use the resampled version of \url{shapenet.cs.stanford.edu/media/modelnet40_normal_resampled.zip}. We follow the hyperparameters of \cite{qi2017pointnet,qi2017pointnetpp} with improvements from the implementation of \cite{yan_pointnet} on which we base our experiments. We train a standard PointNet for 200 epochs with a batch size of 24 with Adam \cite{kingma2014adam} with base learning rate of $0.001$, weight decay of $0.0001$. During training we sample 1024 3D points from every example in \textit{modelnet40}, randomly scale with a scale from the interval $[0.8, 1.25]$, and randomly translate by an offset of up to $0.1$ - if not otherwise mentioned in our experiments. This is the training procedure proposed in \cite{yan_pointnet}. However, we always train the model for the the full 200 epochs and report final \textit{class} accuracy based on the final result - we do not report instance accuracy. We further report invariance tests based on the final model. 

As described in the main body, we evaluate rotational invariance by testing on $16\times 16$ regularly spaced angles from $[0, 2\pi]$, rotating along $xy$ and $yz$ axes. We evaluate scaling invariance by testing the scales $\lbrace 0.001, 0.01, 0.1, 0.5, 1.0, 5.0, 10, 100, 1000\rbrace$.
All experiments for this dataset were run on three single GPU office machines, containing an NVIDIA TITAN Xp, and two GTX 2080ti, respectively.

\section{Additional Numerical Results}
\subsection{Invariance to continous image rotations}
\textbf{Discretization effects in CUB200 }
We further investigate the effect of  discretization using different interpolation schemes for rotation on higher resolution on the CUB-200 dataset (trained at 224x224 resolution) fine-tuned using Resnet-50. Tab.~\ref{tab:rob.train_reg_cub.} shows the results of different training schemes with and without our orbit mapping (\textit{OM}) obtained when using different interpolation schemes for rotation. Besides standard training (\textit{Std.}), we use rotation augmentation (\textit{RA}), and the adversarial training and regularization from \cite{engstrom2019exploring,yang2019invariance}. Even for this higher resolution dataset, the worst-case accuracies between different types of interpolation may differ by more than $15\%$.
\begin{table*}[htb]
\centering
\resizebox{\linewidth}{!}{
\begin{tabular}{l l l l l l  c l l l }
    \hline
 Train& OM &Clean.&  \multicolumn{3}{c}{Average}& \multicolumn{3}{c}{Worst-case}\\
\cline{4-6}\cline{8-10}
&&&Nearest&Bilinear&Bicubic&&Nearest&Bilinear&Bicubic\\
\hline
 \multirow{2}{*}{Std.}  &\ding{55}&\textbf{77.41$\pm$0.33}&37.67$\pm$0.35& 52.45$\pm$0.29 &51.87$\pm$0.31&&3.19$\pm$0.49 &8.07$\pm$0.35& 8.16$\pm$0.33
\\
&\ding{51}Train+Test&71.19$\pm$0.34&63.35$\pm$0.30& 71.56$\pm$0.34 &70.93$\pm$0.35&&40.63$\pm$0.48 &58.80$\pm$0.39& 59.02$\pm$0.41\\
\hline
 \multirow{3}{*}{RA.}  &\ding{55}&69.89$\pm$0.28&67.61$\pm$0.33& 70.12$\pm$0.34 &68.83$\pm$0.37&&34.88$\pm$0.47 &41.01$\pm$0.41& 40.50$\pm$0.43
\\
&\ding{51}Test&69.41$\pm$0.31&69.19$\pm$0.32& 69.27$\pm$0.29 &68.53$\pm$0.38&&48.63$\pm$0.43 &56.28$\pm$0.39& 55.86$\pm$0.40\\
&\ding{51}Train+Test&70.35$\pm$0.46&69.41$\pm$0.23& 70.72$\pm$0.18 &70.37$\pm$0.34&&47.92$\pm$0.26 &57.54$\pm$0.39& 57.62$\pm$0.14 \\
\hline
Advers.&\ding{55}& 64.54$\pm$0.17&53.74$\pm$0.65& 64.07$\pm$0.25 &63.22$\pm$0.54&&26.63$\pm$0.79 &42.82$\pm$0.60& 42.44$\pm$0.55
\\
 Mixed &\ding{55}&
68.56$\pm$0.46&57.17$\pm$0.60& 65.91$\pm$0.42 &65.76$\pm$0.51&&28.06$\pm$0.58 &42.87$\pm$0.32& 42.92$\pm$0.38
\\
Advers.-KL&\ding{55}&
64.47$\pm$0.35&53.93$\pm$0.35& 64.65$\pm$0.26 &64.02$\pm$0.34&&26.94$\pm$0.46 &43.04$\pm$0.63& 42.61$\pm$0.37
\\
Advers.-ALP&\ding{55}&
64.63$\pm$0.31&55.56$\pm$0.67& 64.34$\pm$0.17 &63.21$\pm$0.24&&29.55$\pm$0.69 &43.63$\pm$0.21& 43.48$\pm$0.32
\\
ETN & \ding{55}&64.14$\pm$0.24&64.26$\pm$0.65& 66.95$\pm$0.42 &64.32$\pm$0.62&&43.33$\pm$1.01 &52.85$\pm$1.12& 49.72$\pm$1.31
\\
TIpool&\ding{55}&76.80$\pm$0.25&60.67$\pm$0.79& 74.90$\pm$0.15 &74.82$\pm$0.24&&36.06$\pm$1.12 &59.04$\pm$0.37& 59.50$\pm$0.41
\\
TIpool-RA&\ding{55}&73.47$\pm$0.48&72.30$\pm$0.51& 74.71$\pm$0.29 &73.65$\pm$0.36&&57.22$\pm$0.64 &62.82$\pm$0.56& 62.31$\pm$0.42
\\
TIpool&\ding{51}Train+Test&76.82$\pm$0.15&68.50$\pm$0.58& \textbf{77.18$\pm$0.18} &\textbf{77.04$\pm$0.16}&&49.85$\pm$0.65 &\textbf{69.19$\pm$0.36}& \textbf{69.64$\pm$0.33}\\
TIpool-RA&\ding{51}Train+Test&74.78$\pm$0.20&\textbf{73.79$\pm$0.48}& 75.89$\pm$0.17 &75.07$\pm$0.16&&\textbf{59.57$\pm$0.57} &67.78$\pm$0.20& 67.64$\pm$0.18\\

\hline
\end{tabular}
}
\vspace{0.5pt}
\caption{Effect of augmentation and including gradient based orbit mapping \textit{(OM)} on robustness to rotations with different interpolations for CUB200 classification using Resnet50. Shown are clean accuracy on standard test set and  average and worst-case accuracies on rotated test set. Mean and standard deviations over 5 runs are reported. }
\label{tab:rob.train_reg_cub.}
\end{table*}

\begin{table*}[htb]
\centering
\resizebox{\linewidth}{!}{
\begin{tabular}{l |l l l    l l l  c l l l }
\hline
Network &  Train& OM& Std.&  \multicolumn{3}{c}{Average}& \multicolumn{3}{c}{Worst-case}\\
\cline{5-7}\cline{9-11}
&&&&Nearest&Bilinear&Bicubic&&Nearest&Bilinear&Bicubic\\
\hline
\multirow{6}{*}{Linear}    & \multirow{2}{*}{Std.}      &\ding{55}&  \textbf{38.89$\pm$0.17}&25.31$\pm$0.21& 25.57$\pm$0.22 &25.48$\pm$0.24&&2.50$\pm$0.11 &3.56$\pm$0.17& 3.26$\pm$0.11\\
&     &\ding{51}Train+Test& 31.87$\pm$0.10&\textbf{31.25$\pm$0.04}& \textbf{31.58$\pm$0.05} &\textbf{31.33$\pm$0.04}&&13.08$\pm$0.23 &18.85$\pm$0.21& 18.21$\pm$0.21 \\
\cline{2-11}

&\multirow{3}{*}{RA}  &\ding{55}& 29.73$\pm$0.18&30.66$\pm$0.03& 30.77$\pm$0.03 &30.72$\pm$0.03&&14.30$\pm$0.42 &18.31$\pm$0.29& 16.94$\pm$0.37\\ 
&    & \ding{51}Test  &	30.60$\pm$0.13&30.52$\pm$0.07& 30.65$\pm$0.08 &30.54$\pm$0.09&&16.83$\pm$0.47 &21.17$\pm$0.28& 20.37$\pm$0.26\\
&    & \ding{51}Train+Test  &31.06$\pm$0.26&31.07$\pm$0.11& 31.27$\pm$0.10 &31.13$\pm$0.09&&\textbf{19.19$\pm$0.28} &\textbf{24.25$\pm$0.31}&\textbf{23.68$\pm$0.31}\\

\cline{2-11}
&  Advers.         &\ding{55}&28.82$\pm$0.77&29.46$\pm$0.60& 29.62$\pm$0.56 &29.36$\pm$0.56&&11.45$\pm$0.81 &14.20$\pm$0.93& 13.65$\pm$0.55\\ 
\hline
\hline
\multirow{6}{*}{Convnet} &\multirow{2}{*}{Std.}     &\ding{55} &
\textbf{86.12$\pm$0.33}&32.01$\pm$0.32& 35.97$\pm$0.26 &38.15$\pm$0.36&&0.85$\pm$0.09 &0.57$\pm$0.06& 0.89$\pm$0.14
\\
&&\ding{51}Train+Test &76.13$\pm$0.96&64.34$\pm$0.35& 71.21$\pm$0.96 &\textbf{74.61$\pm$0.84}&&25.78$\pm$0.49 &49.60$\pm$0.79& 55.57$\pm$0.81\\

\cline{2-11}   

                          & \multirow{3}{*}{RA}    &\ding{55} &75.03$\pm$0.99&71.77$\pm$0.84& 65.45$\pm$0.66 &70.22$\pm$0.66&&27.96$\pm$0.50 &27.06$\pm$0.61& 32.51$\pm$0.53\\ 
                        &    & \ding{51}Test &70.12$\pm$0.64&67.64$\pm$0.55& 61.03$\pm$0.67 &66.09$\pm$0.71&&39.01$\pm$0.57 &42.88$\pm$0.90& 49.39$\pm$0.68\\
                          &    & \ding{51}Train+Test &74.30$\pm$0.77&\textbf{73.24$\pm$0.58}& 69.52$\pm$0.53 &73.38$\pm$0.59&&\textbf{46.25$\pm$0.54} &\textbf{53.36$\pm$0.57}& \textbf{59.04$\pm$0.53}\\

\cline{2-11}   
& Advers.         &\ding{55}  &72.96$\pm$0.95&62.08$\pm$0.59& \textbf{74.29$\pm$0.88} &73.86$\pm$0.76&&26.24$\pm$0.43 &50.99$\pm$0.54& 52.46$\pm$0.51\\

\hline
\hline
\multirow{6}{*}{Resnet18} &
\multirow{2}{*}{Std.}  &  \ding{55} & \textbf{93.98$\pm$0.32} & 35.12$\pm$0.81& 40.06$\pm$0.44 &42.81$\pm$0.50&&0.79$\pm$0.38 &1.31$\pm$0.13& 2.22$\pm$0.17\\
&&\ding{51} Train+Test& 87.99$\pm$0.43&72.40$\pm$0.33& \textbf{84.12$\pm$0.55} &\textbf{86.61$\pm$0.49}&&34.57$\pm$0.94 &68.60$\pm$0.81& 74.49$\pm$0.84\\

\cline{2-11} 
&\multirow{3}{*}{RA}& \ding{55} &
85.54$\pm$0.72&80.47$\pm$0.74& 75.99$\pm$0.72 &79.47$\pm$0.65&&45.50$\pm$0.83 &44.71$\pm$0.74& 50.50$\pm$0.78\\
& & \ding{51} Test    &79.26$\pm$0.42
 &	74.93$\pm$0.51& 69.31$\pm$0.65 &73.94$\pm$0.63&&48.93$\pm$0.75 &52.18$\pm$0.91& 58.69$\pm$0.78
\\
&&  \ding{51} Train+Test& 85.40$\pm$0.57&\textbf{84.37$\pm$0.58}& 81.82$\pm$0.59 &84.82$\pm$0.52&&\textbf{66.22$\pm$0.75} &\textbf{71.09$\pm$1.01}&\textbf{76.44$\pm$0.89}\\
\cline{2-11}

&Advers.& \ding{55}&69.32$\pm$1.61&61.73$\pm$1.12& 68.54$\pm$0.68 &68.00$\pm$0.31&&36.95$\pm$0.97 &50.21$\pm$0.55& 49.73$\pm$0.98
\\
\hline
\end{tabular}
}
\vspace{0.5pt}
\caption{Comparing  rotational  invariance  using  training  schemes  vs. orbit mapping for CIFAR10 classification using \textit{i)~Linear network ii)~5-layer Convnet iii)~Resnet18}. Shown are the mean clean accuracy and the average and worst case accuracies when test images are rotated in steps of 1 degree. The mean and standard deviation values over 5 runs are reported.}
\label{tab:cifar10_supple}
\end{table*}
In particular, adversarial training with bi-linear interpolation is still more vulnerable to image rotations with nearest neighbor interpolation. Even the learned ETN also exhibits similar behavior. While our approach is also affected by the interpolation effects, the vulnerability to nearest neighbor interpolation is ameliorated when using rotation augmentation. We obtain best results using orbit mapping  in conjunction with the discrete invariant approach \cite{laptev2016ti}\vspace{1em} \\
\textbf{Effect of Network architecture for CIFAR10} To investigate the effectiveness of our approach, we experiment three different network architectures:\textit{~i)~a linear network, ii)~a 5-layer convnet ii)~a Resnet18.} We compare the performance of our orbit mapping approach with training schemes, i.e. augmentation and adversarial training  for rotational invariance in Tab.~\ref{tab:cifar10_supple}. For all the three architectures considered,  our orbit mapping  together with rotation augmentation consistently results in  the most accurate predictions in the worst case.\vspace{1em} \\
\textbf{Comparing Computation Complexity for CIFAR10} In Tab.~\ref{tab:cifar_train_time}, the training times using different approaches are compared for rotation-invariant CIFAR10 classification. It can be noted that the proposed gradient based orbit mapping is significantly easier and computationally cheaper to train in comparison with other approaches for incorporating invariance. In contrast, adversarial training is the most computationally expensive approach.\\ 
\begin{table*}[htb]
\small
\centering
\resizebox{0.8\linewidth}{!}{
\begin{tabular}{ c | c | c | c | c | c }
Method & Std. & STN & ETN & Adv. & OM\\
\hline
Train-time/epoch&18.05$\pm$0.05&18.90$\pm$0.05&18.89$\pm$0.07&72.09$\pm$0.18&18.59$\pm$0.04\\
\end{tabular}}
\caption{Average training time per epoch in seconds for different approaches to incorporate rotation invariance, with Resnet18 as base architecture for CIFAR10 classification. Training time correspond to  runs on a machine with single Titan-RTX GPU.\label{tab:cifar_train_time}}
\end{table*}
\\
\textbf{Comparing Computational Complexity of ROTMNIST}
\begin{table*}[h]
\small
\begin{center}
\begin{tabular}{ c | c |  c }
 \multirow{2}{*}{OM} &  \multicolumn{1}{c|}{D4/C4}& \multicolumn{1}{c}{D16/C16}\\
 & Train-time/epoch &Train-time/epoch \\
\hline
\ding{55} &  4.47 s & 41.89 s
\\
 \ding{51} & 4.78 s & 42.08 s
 \\

\end{tabular}
\end{center}
\caption{Comparing computational complexity of D4/C4 and D16/C16 models. Orbit mapping adds no learnable parameters and increases training time very marginally ($\sim$0.3 seconds/epoch). Training times correspond to  runs on a machine with single Titan-RTX GPU.\label{tab:rot_mnit_train_time}}
\end{table*}
Tab.~\ref{tab:rot_mnit_train_time} compares the computational complexity of the D4/C4 and D16/C16 models. The D16/C16 model has significantly higher computational complexity than the D4/C4 model,  though the number of learnable parameters is nearly same. The network size of D16/C16 network s higher due to more rotated copies of the filters, resulting in larger training and inference times. Orbit mapping adds no learnable parameters and increases training time very marginally ($\sim$0.3 seconds/epoch). Training times correspond to  runs on a machine with single Titan-RTX GPU.

\clearpage
%
%

\bibliographystyle{splncs}

\end{document}